\newcommand{\xqk}[1]{\textcolor{blue}{\small\textbf{[QK:]} #1}}
\newcommand{\cw}[1]{\textcolor{red}{\small\textbf{[CW:]} #1}}
\newcommand{\sref}[2]{\hyperref[#2]{#1~\ref{#2}}} 
\newcommand{\taskcolor}{\textbf{Color Classification}\xspace}
\newcommand{\taskshape}{\textbf{Shape Classification}\xspace}
\newcommand{\ellmodel}{\ell_{\mathcal M}}
\newcommand{\elloracle}{\ell_{\star}}
\newcommand{\ellaverage}{\ell_{\mathcal K}}
\newcommand{\ellincorrect}{\ell_{\times}}
\newcommand{\ellaggregate}{\ell_{\mathcal G}}
\newtheorem{theorem}{Theorem}
\newtheorem{lemma}[theorem]{Lemma}
\newtheorem{definition}{Definition} 
\theoremstyle{nonumberplain}
\newtheorem{proof}{Proof}
\title{Humanly Certifying Superhuman Classifiers}
\title{Humanly Certifying Superhuman Classifiers}
\author {
    Qiongkai Xu,\textsuperscript{\rm 1,2}
    Christian Walder, \textsuperscript{\rm 1,2}
    Chenchen Xu \textsuperscript{\rm 1,2}
}
\begin{document}

\maketitle


\begin{abstract}
Estimating the performance of a machine learning system is a longstanding challenge in artificial intelligence research. Today, this challenge is especially relevant given the emergence of systems which appear to increasingly outperform human beings. In some cases, this ``superhuman'' performance is readily demonstrated; for example by defeating legendary human players in traditional two player games. On the other hand, it can be challenging to evaluate classification models that potentially surpass human performance. Indeed, human annotations are often treated as a ground truth, which implicitly assumes the superiority of the human over any models trained on human annotations. In reality, human annotators can make mistakes and be subjective. Evaluating the performance with respect to a genuine oracle may be more objective and reliable, even when querying the oracle is expensive or impossible.
In this paper, we first raise the challenge of evaluating the performance of both humans and models with respect to an oracle which is unobserved. We develop a theory for estimating the accuracy compared to the oracle, using only imperfect human annotations for reference. Our analysis provides a simple recipe for detecting and certifying superhuman performance in this setting, which we believe will assist in understanding the stage of current research on classification. We validate the convergence of the bounds and the assumptions of our theory on carefully designed toy experiments with known oracles. Moreover, we demonstrate the utility of our theory by meta-analyzing large-scale natural language processing tasks, for which an oracle does not exist, and show that under our assumptions a number of models from recent years are with high probability superhuman. 
\end{abstract}


\section{Introduction}
Artificial Intelligence (AI) agents have begun to outperform humans on remarkably challenging tasks; AlphaGo defeated legendary Go players~\cite{silver2016mastering,singh2017learning}, and OpenAI's Dota2 AI has defeated human world champions of the game~\cite{berner2019dota}. These AI tasks may be evaluated objectively, \textit{e.g.} using the total score achieved in a game and the victory or defeat against another player.
However, for supervised learning tasks such as image classification and sentiment analysis, certifying a machine learning model as superhuman is subjectively tied to human judgments rather than comparing with an oracle. This work focuses on paving a way towards evaluating models with potentially superhuman performance in classification. 

\begin{figure}[t]
    \centering
    \includegraphics[width=\linewidth]{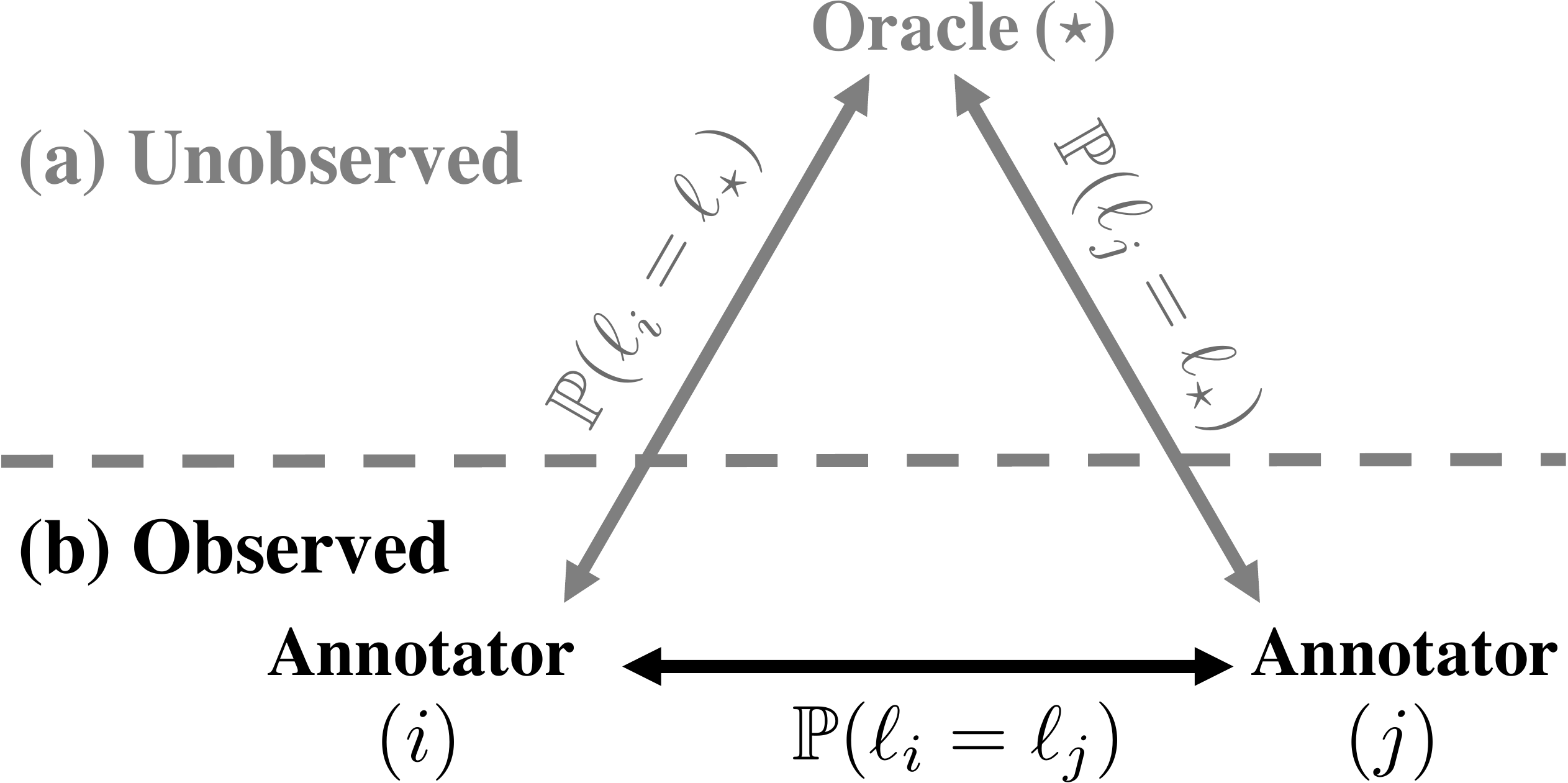}
    \caption{The relationship between \textit{a)} the oracle accuracy of the annotators, $P(\ell_i=\elloracle)$, and \textit{b)} the agreement between two annotators, $P(\ell_i=\ell_j)$. $\ell_i$ and $\ell_j$ are labels given by annotator $i$ and $j$, $\elloracle$ is the oracle label. In our setting, part \textit{a)} is unobserved ({\color[HTML]{7F7F7F}\textit{gray}}) and part \textit{b)} is observed (\textit{black}).}
    \label{fig:tease}
\end{figure}

When evaluating the performance of a classification model, we generally rely on the accuracy of the predicted labels with regard to ground truth labels, which we call the \textit{oracle accuracy}.
However, oracle labels may arguably be unobservable. For tasks such as object detection, the predictions are subjective to many factors of the annotators, \textit{e.g.}, their background and physical or mental state. For other tasks, even experts may not be able to summarize an explicit rule for the prediction, such as predicting molecule toxicity and stability. 
Without observing the oracle labels, human predictions or aggregated human annotations are treated as ground truth~\cite{wang2018glue,lin2014microsoft,wang2019superglue} to approximate the oracle. Such approximation mainly suffers from two disadvantages. Firstly, the quality control of human annotation is challenging~\cite{artstein2017inter,lampert2016empirical}. Secondly, current evaluation paradigms focus on evaluating the performance of models, but not the oracle accuracy of humans --- yet we cannot claim that a machine learning model is superhuman without a proper estimation on human performance.

In this paper, we work on the setting that oracle labels are unobserved (see Figure~\ref{fig:tease}). 
Within this setting, we develop a theory for estimating the oracle accuracy on classification tasks. Our theory includes \textit{i)} upper bounds for the averaged oracle accuracy of the annotators, \textit{ii)} lower bounds for the oracle accuracy of the model, and \textit{iii)} finite sample analysis for both bounds and their margin which represents the model's outperformance. We propose an algorithm to discover competitive models and to report confidence scores, which formally bound the probability that a given model outperforms the average 
human annotator. Empirically, we observe that some existing models for sentiment classification and natural language inference (NLI) have already achieved superhuman performance.


\section{Related Work}
Classification accuracy is a widely used measure of model performance~\cite{han2011data}, although there are other options such as precision, recall, F1-score~\cite{chowdhury2010introduction,sasaki2007truth}, Matthews correlation coefficient~\cite{matthews1975comparison,chicco2020advantages}, \textit{etc.}. Accuracy measures the disagreement between the model outputs and some reference labels. A common practice is to collect human labels to treat as the reference. However, we argue that the ideal reference is rather the (unobserved) oracle, as human predictions are imperfect. We focus on measuring the \textit{oracle accuracy} for both human annotators and machine learning models, and for comparing the two.

A widely accepted approach is to crowd source \cite{kittur2008crowdsourcing,mason2012conducting} a dataset for testing purposes. The researchers collect a large corpus with each examples labeled by multiple annotators. Then, the aggregated annotations are treated as ground truth labels \cite{socher2013recursive,bowman2015large}. This largely reduces the variance of the prediction \cite{nowak2010reliable,kruger2014axiomatic}, however, such aggregated results are still not oracle, and their difference to oracle remains unclear. In our paper, we proves that the accuracy on aggregated human prediction, as ground truth, could be considered as a special case of the lower bound of oracle accuracy for machine learning models.
On the other hand, much work considers the reliability of collected data, by providing the agreement scores between annotators~\cite{landis1977measurement}. Statistical measures for the reliability of inter-annotator agreement~\cite{gwet2010handbook}, such as Cohen's Kappa~\cite{pontius2011death} and Fleiss' Kappa~\cite{fleiss1971measuring}, are normally based on the raw agreement ratio.  However, the agreement between annotators does not obviously reflect the oracle accuracy; \textit{e.g.} identical predictions from two annotators does not mean they are both oracles. In our paper, we prove that observed agreement between all annotators could serve as an upper bound for the average oracle accuracy of those annotators. Overall, we propose a theory for comparing the oracle accuracy of human annotators and machine learning models, by connecting the aforementioned bounds. 

The discovery that models can predict better than human experts dates back at least to the seminal and controversial work of \citet{Meehl54clinicalversus}, which compared \textit{ad hoc} predictions based on subjective and informal information to those based on simple linear models with a (typically small) number of relevant numeric attributes. Subsequent work found that one may even train such a model to mimic the predictions made by the experts (rather than an oracle), and yet still maintain superior out of sample performance \cite{goldberg70}. The comparison of human and algorithmic decision making remains an active topic of psychology research \cite{kahnemannoise}.


\section{Evaluation Theory}
In this section, we present the theory for comparing the oracle accuracy for classification tasks between human annotators and machine learning models.

\subsection{Problem Statement}

We are given $K$ labels crowd sourced from $K$ human annotators, $\{ \ell_i \}_{i=1}^K$, along some labels from a model $\ellmodel$.
We denote $\ell_a^{(n)}$ and $\ellmodel^{(n)}$ the label assigned by annotator $a$ and model $\mathcal M$ to the $n$-th data point, for $n=1,2,\dots,N$.
We observe the ratio of matched labels $\mathbb{P}(\ell_a = \ell_{b})$ for all of a pairs of annotators $a$ and $b$. Denote by $\ellaverage$ the label of the ``average'' human annotator which we define as the label obtained by selecting one of the $K$ human annotators uniformly at random. We seek to formally compare the oracle accuracy of the average human, $\mathbb{P}(\ellaverage = \elloracle)$, with that of the machine learning model, $\mathbb{P}(\ellmodel = \elloracle)$, where $\elloracle$ is the unobserved oracle label. Denote by $\ellaggregate$ the label obtained by aggregating (say, by majority voting) the $K$ human annotators' labels.
Our work distinguishes between the oracle accuracy $\mathbb{P}(\ellmodel = \elloracle)$ and the agreement with human annotations $\mathbb{P}(\ellmodel = \ell_\mathcal{G})$, although these two concepts have been confounded in many previous applications and benchmarks.

\subsection{An Upper Bound for the Average Annotator Performance}
The oracle accuracy of the average annotator $\ellaverage$ follows the definition of the previous section, and conveniently equals the average of the oracle accuracy of each annotator, \textit{i.e.}
\begin{equation}
     \label{eqn:averageannotator}
    \mathbb{P}(\ellaverage = \elloracle) = \frac{1}{K}\sum_{i=1}^{K} \mathbb{P}(\ell_i = \elloracle).
\end{equation}
By introducing an assumption, also discussed in Section~\ref{subsec:result_discussion}, we may bound the above quantity.
\begin{theorem}[Average Performance Upper Bound] \label{thm:basicupperbound}
Assume annotators are positively correlated, namely $\mathbb{P}(\ell_i = \elloracle|\ell_j = \elloracle) \geq \mathbb{P}(\ell_i = \elloracle)$.
Then, the upper bound of averaged annotator accuracy with respect to the oracle is
\begin{equation}
\label{eqn:basicupperbound}
    \mathbb{P}(\ellaverage = \elloracle) \leq \mathcal{U} \triangleq \sqrt{ \frac{1}{K^2} \sum_{i=1}^{K}\sum_{j=1}^{K}\mathbb{P}(\ell_i = \ell_j)}.
\end{equation}
\end{theorem}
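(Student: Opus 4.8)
The plan is to reduce everything to a pointwise (pairwise) inequality between $\mathbb{P}(\ell_i=\ell_j)$ and $\mathbb{P}(\ell_i=\elloracle)\,\mathbb{P}(\ell_j=\elloracle)$, and then sum. Write $p_i \triangleq \mathbb{P}(\ell_i=\elloracle)$, so that by Eq.~\eqref{eqn:averageannotator} the left-hand side is $\frac1K\sum_i p_i$. The first step is the elementary set-containment observation that if annotator $i$ and annotator $j$ both agree with the oracle then they agree with each other: $\{\ell_i=\elloracle\}\cap\{\ell_j=\elloracle\}\subseteq\{\ell_i=\ell_j\}$. Taking probabilities gives $\mathbb{P}(\ell_i=\ell_j)\geq \mathbb{P}(\ell_i=\elloracle,\,\ell_j=\elloracle)$ for every pair $(i,j)$.

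The second step invokes the positive-correlation assumption. When $p_j>0$ we may write $\mathbb{P}(\ell_i=\elloracle,\,\ell_j=\elloracle)=\mathbb{P}(\ell_i=\elloracle\mid\ell_j=\elloracle)\,p_j\geq p_i p_j$; when $p_j=0$ the bound $\mathbb{P}(\ell_i=\ell_j)\geq 0 = p_i p_j$ is trivial. Combining with step one, $\mathbb{P}(\ell_i=\ell_j)\geq p_i p_j$ for all $i,j$ (the diagonal terms $i=j$ are consistent since $1\geq p_i^2$).

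The third step is to sum this over all $i,j\in\{1,\dots,K\}$ and recognize the double sum on the right as a square:
\begin{equation}
\sum_{i=1}^{K}\sum_{j=1}^{K}\mathbb{P}(\ell_i=\ell_j)\ \geq\ \sum_{i=1}^{K}\sum_{j=1}^{K} p_i p_j\ =\ \Bigl(\sum_{i=1}^{K} p_i\Bigr)^{2}.
\end{equation}
Dividing by $K^2$ and using Eq.~\eqref{eqn:averageannotator} yields $\mathcal{U}^2 \geq \mathbb{P}(\ellaverage=\elloracle)^2$; since both quantities are nonnegative, taking square roots gives the claimed bound.

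I do not expect a genuine obstacle here — the argument is a two-line containment plus the assumption plus a sum-of-a-square identity. The only points that warrant a sentence of care are the degenerate case $p_j=0$ where the conditional probability is undefined (handled trivially), and making explicit that the final square-root step is monotone because both sides are nonnegative. If one wanted to be thorough, one could also remark that no independence or identical-distribution assumption on the annotators is used, only the stated positive correlation with the oracle.
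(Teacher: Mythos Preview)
Your proof is correct and follows essentially the same approach as the paper: both establish the pairwise inequality $\mathbb{P}(\ell_i=\ell_j)\geq p_i p_j$ and then sum to recognize the square $(\sum_i p_i)^2$. The only cosmetic difference is that the paper obtains $\mathbb{P}(\ell_i=\ell_j)\geq \mathbb{P}(\ell_i=\elloracle,\ell_j=\elloracle)$ by conditioning on $\{\ell_j=\elloracle\}$ via the law of total probability and dropping the complementary term (treating $i=j$ separately), whereas you reach the same intermediate bound more directly via the set containment $\{\ell_i=\elloracle\}\cap\{\ell_j=\elloracle\}\subseteq\{\ell_i=\ell_j\}$; your explicit handling of the degenerate case $p_j=0$ is a nice touch the paper omits.
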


\begin{proof}\  \\
For $i \neq j$ and $i,j\in \{1,\cdots, K\}$, we have
\begin{align}  \nonumber
\mathbb{P}(\ell_i = \ell_j) = &\mathbb{P}(\ell_i = \ell_j|\ell_j = \elloracle)\mathbb{P}(\ell_j = \elloracle)+ \\ \nonumber
&\mathbb{P}(\ell_i = \ell_j|\ell_j \neq \elloracle)\mathbb{P}(\ell_j \neq \elloracle)\\  \nonumber
\geq & \mathbb{P}(\ell_i = \ell_j|\ell_j = \elloracle)\mathbb{P}(\ell_j = \elloracle) \\  \nonumber
= & \mathbb{P}(\ell_i = \elloracle|\ell_j = \elloracle)\mathbb{P}(\ell_j = \elloracle) \\ 
\geq & \mathbb{P}(\ell_i = \elloracle)\mathbb{P}(\ell_j = \elloracle). \label{eq:avg_upper_bound_neq}
\end{align}
While for $i=j$, we have $\mathbb{P}(\ell_i = \ell_j)=1$. Therefore,
\begin{equation}
    \mathbb{P}(\ell_i = \ell_j) \geq \mathbb{P}(\ell_i = \elloracle)\mathbb{P}(\ell_j = \elloracle). \label{eq:avg_upper_bound_eq}
\end{equation}
Then, combining \eqref{eq:avg_upper_bound_neq} and \eqref{eq:avg_upper_bound_eq},
\begin{align}
    \mathbb{P}(\ellaverage = \elloracle)^2= &\frac{1}{K^2}\sum_{i=1}^{K} \mathbb{P}(\ell_i = \elloracle)\sum_{j=1}^{K} \mathbb{P}(\ell_j = \elloracle) \\
    \leq &\frac{1}{K^2} \sum_{i=1}^{K} \sum_{j=1}^{K}\mathbb{P}(\ell_i = \ell_j)\\
    \mathbb{P}(\ellaverage = \elloracle) \leq & \sqrt{\frac{1}{K^2} \sum_{i=1}^{K}\sum_{j=1}^{K}\mathbb{P}(\ell_i = \ell_j)}. \label{eq:upper_bound}
\end{align}
\end{proof}

We observe that $\mathbb P(\ell_i = \ell_j)$ is overestimated as $1$ when $i=j$, but that the total overestimation to $\mathcal{U}^2$ is less or equal to $1/K$ ($K$ out of $K^2$ terms), and that the influence will reduce and converge to zero as $K\rightarrow \infty$. 
To calibrate the overestimation, we introduce an empirically approximated upper bound $\mathcal{U}^{(e)}$. In contrast, $\mathcal{U}$ in \eqref{eqn:basicupperbound} is also noted as theoretical upper bound, $\mathcal{U}^{(t)}$.
\begin{definition}
    The \textbf{empirically approximated upper bound}, 
\begin{equation}
    \mathcal{U}^{(e)} \triangleq \sqrt{\frac{1}{K(K-1)} \sum_{i=1}^{K}\sum_{\substack{j=1\\i\neq j}}^{K}\mathbb{P}(\ell_i = \ell_j)}. \label{eq:upper_bound_empirical}
\end{equation}
\end{definition}
\begin{lemma}[Convergence of $\mathcal{U}^{(e)}$]\label{lem:ue}
 Assume that $\mathbb{P}(\ell_i = \ell_j) \geq 1/N_c$, where $N_c$ is number of classes.
 The approximated upper bound $\mathcal{U}^{(e)}$ satisfies
\begin{equation}
    \lim_{K\rightarrow +\infty} \mathcal{U}/\mathcal{U}^{(e)} =1. \label{eq:empirical_approx}
\end{equation}
Therefore,  with large $K$, $\mathcal{U}^{(e)}$ converges to $\mathcal{U}$ or $\mathcal{U}^{(t)}$.
\end{lemma}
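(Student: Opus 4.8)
The plan is to collapse the ratio $\mathcal{U}/\mathcal{U}^{(e)}$ into an explicit rational function of $K$ together with the off-diagonal agreement mass, and then use the assumption to show that mass grows like $K^2$. Concretely, I would introduce the shorthand $S \triangleq \sum_{i=1}^{K}\sum_{\substack{j=1\\ j\neq i}}^{K}\mathbb{P}(\ell_i = \ell_j)$ for the sum of the $K(K-1)$ off-diagonal terms. Since the diagonal contributes exactly $K$ (each term being $\mathbb{P}(\ell_i=\ell_i)=1$), we have $\mathcal{U}^2 = (S+K)/K^2$ and $(\mathcal{U}^{(e)})^2 = S/(K(K-1))$, so
\begin{equation}
\frac{\mathcal{U}^2}{(\mathcal{U}^{(e)})^2} = \frac{(S+K)(K-1)}{K\,S} = \Bigl(1 - \tfrac{1}{K}\Bigr)\Bigl(1 + \tfrac{K}{S}\Bigr). \nonumber
\end{equation}
As a free byproduct, bounding each off-diagonal probability by $1$ gives $S \le K(K-1)$, hence $K/S \ge 1/(K-1)$ and the right-hand side is always $\ge 1$, consistent with the remark that $\mathcal{U}^{(t)}=\mathcal{U}$ overestimates and $\mathcal{U}^{(e)}$ corrects it.

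Next I would invoke the hypothesis $\mathbb{P}(\ell_i=\ell_j) \geq 1/N_c$ to get the lower bound $S \geq K(K-1)/N_c$, which yields $0 \le K/S \le N_c/(K-1)$. Letting $K \to \infty$, the upper bound tends to $0$, so $K/S \to 0$; combined with $1 - 1/K \to 1$, the displayed product converges to $1$. Since $\mathcal{U}, \mathcal{U}^{(e)} > 0$, taking square roots and using continuity of $\sqrt{\cdot}$ gives $\lim_{K\to\infty}\mathcal{U}/\mathcal{U}^{(e)} = 1$, which is \eqref{eq:empirical_approx}; the concluding sentence of the lemma ("$\mathcal{U}^{(e)}$ converges to $\mathcal{U}$") is then just a restatement.

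I do not expect a genuine obstacle: the whole content is that the assumption upgrades $S \ge 0$ to $S = \Theta(K^2)$, which is precisely what forces the correction term $K/S$ to vanish rather than persist at order $1$. The only points worth spelling out explicitly are (i) why the assumption is natural — two annotators drawing labels at least as concentrated as the uniform distribution over $N_c$ classes agree with probability at least $1/N_c$ — and (ii) that some such lower bound is necessary, since if the pairwise agreements were allowed to decay then $S$ could be $o(K^2)$ and the ratio need not tend to $1$.
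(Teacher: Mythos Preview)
Your proposal is correct and follows essentially the same route as the paper: both factor the (squared) ratio into $(1-1/K)(1+K/S)$ with $S$ the off-diagonal agreement sum, then use the hypothesis $\mathbb{P}(\ell_i=\ell_j)\ge 1/N_c$ to bound $K/S\le N_c/(K-1)\to 0$ and conclude. The only cosmetic difference is that the paper carries the square root throughout rather than squaring first and taking roots at the end; your extra observations (that the ratio is always $\ge 1$, and that some lower bound on the agreements is genuinely needed) are not in the paper but are welcome additions.
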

We provide a detailed proof of this Lemma in Appendix~\ref{appendix:proof}. Some empirical evidences for the convergence of the $\mathcal{U}^{(e)}$ to $\mathcal{U}^{(t)}$ are demonstrated in Section~\ref{subsec:result_discussion}.

\subsection{A Lower Bound for Model Performance}

For our next result, we introduce another assumption, also discussed in Section~\ref{subsec:result_discussion}. Given two predicted labels $\ell_a$ and $\ell_b$, we assume that $\ell_b$ is reasonably predictive even on those instances that $a$ gets wrong, as per
\begin{theorem}[Performance Lower Bound]
\label{thm:basiclowerbound}
Assume that for any incorrect label $\ellincorrect \neq \elloracle$,
\begin{equation}
    \mathbb{P}(\ell_b = \elloracle|\ell_a \neq \elloracle) \geq \mathbb{P}(\ell_b = \ellincorrect|\ell_a \neq \elloracle). \label{eq:assumption_low}
\end{equation}
Then, the lower bound for the oracle accuracy of $\ell_b$ is
\begin{equation}
    \mathcal{L} \triangleq \mathbb{P}(\ell_a = \ell_b) \leq \mathbb{P}(\ell_b = \elloracle). \label{eq:lower_bound}
\end{equation}
\end{theorem}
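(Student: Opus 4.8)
The plan is to compare $\mathcal{L}=\mathbb{P}(\ell_a=\ell_b)$ with $\mathbb{P}(\ell_b=\elloracle)$ by splitting on whether $\ell_a$ is itself correct. By the law of total probability,
\begin{align*}
\mathbb{P}(\ell_a=\ell_b) &= \mathbb{P}(\ell_a=\ell_b,\,\ell_a=\elloracle)+\mathbb{P}(\ell_a=\ell_b,\,\ell_a\neq\elloracle),\\
\mathbb{P}(\ell_b=\elloracle) &= \mathbb{P}(\ell_b=\elloracle,\,\ell_a=\elloracle)+\mathbb{P}(\ell_b=\elloracle,\,\ell_a\neq\elloracle).
\end{align*}
On the event $\{\ell_a=\elloracle\}$ the events $\{\ell_a=\ell_b\}$ and $\{\ell_b=\elloracle\}$ coincide, so the first summand on each line is identical and cancels. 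Hence the claim reduces to showing $\mathbb{P}(\ell_a=\ell_b\mid\ell_a\neq\elloracle)\le\mathbb{P}(\ell_b=\elloracle\mid\ell_a\neq\elloracle)$ (the degenerate case $\mathbb{P}(\ell_a\neq\elloracle)=0$ being trivial).

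For the reduced inequality I would condition further on the particular (necessarily incorrect) value taken by $\ell_a$, writing both sides as convex combinations with the same weights $w_{\ellincorrect}\triangleq\mathbb{P}(\ell_a=\ellincorrect\mid\ell_a\neq\elloracle)$ over $\ellincorrect\neq\elloracle$:
\begin{align*}
\mathbb{P}(\ell_a=\ell_b\mid\ell_a\neq\elloracle)&=\sum_{\ellincorrect\neq\elloracle}w_{\ellincorrect}\,\mathbb{P}(\ell_b=\ellincorrect\mid\ell_a=\ellincorrect),\\
\mathbb{P}(\ell_b=\elloracle\mid\ell_a\neq\elloracle)&=\sum_{\ellincorrect\neq\elloracle}w_{\ellincorrect}\,\mathbb{P}(\ell_b=\elloracle\mid\ell_a=\ellincorrect),
\end{align*}
where I used that $\{\ell_a=\ellincorrect\}\subseteq\{\ell_a\neq\elloracle\}$ for $\ellincorrect\neq\elloracle$. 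Invoking assumption~\eqref{eq:assumption_low} to obtain $\mathbb{P}(\ell_b=\ellincorrect\mid\ell_a=\ellincorrect)\le\mathbb{P}(\ell_b=\elloracle\mid\ell_a=\ellincorrect)$ for each incorrect $\ellincorrect$, and comparing the two convex combinations termwise, yields the reduced inequality, whence $\mathcal{L}\le\mathbb{P}(\ell_b=\elloracle)$.

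I expect the main obstacle to be lining up assumption~\eqref{eq:assumption_low} with the termwise comparison the argument actually needs, namely $\mathbb{P}(\ell_b=\ellincorrect\mid\ell_a=\ellincorrect)\le\mathbb{P}(\ell_b=\elloracle\mid\ell_a=\ellincorrect)$: the statement that, conditioned on $a$ emitting a \emph{specific} wrong label, $b$ is still at least as likely to produce the oracle label as to echo that wrong label. If instead one uses only the cruder bound $\mathbb{P}(\ell_a=\ellincorrect,\,\ell_b=\ellincorrect\mid\ell_a\neq\elloracle)\le\mathbb{P}(\ell_b=\ellincorrect\mid\ell_a\neq\elloracle)$ and then applies~\eqref{eq:assumption_low} against the aggregate event $\{\ell_a\neq\elloracle\}$, summing over the $N_c-1$ incorrect labels costs a factor of order $N_c$ and the bound no longer closes. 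So I would take care to apply~\eqref{eq:assumption_low} at the level of each conditioning event $\{\ell_a=\ellincorrect\}$; under that reading the proof goes through cleanly, and I would flag this as the point to confirm against the intended meaning of the hypothesis.
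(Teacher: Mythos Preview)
Your decomposition is exactly the paper's: split $\mathbb{P}(\ell_a=\ell_b)$ on $\{\ell_a=\elloracle\}$ versus $\{\ell_a\neq\elloracle\}$, note the two first summands coincide, and bound the conditional on $\{\ell_a\neq\elloracle\}$ using the assumption. The only difference is granularity: the paper invokes the hypothesis in a single stroke as
\[
\mathbb{P}(\ell_b=\ell_a\mid \ell_a\neq\elloracle)\le \mathbb{P}(\ell_b=\elloracle\mid \ell_a\neq\elloracle),
\]
i.e.\ it tacitly instantiates $\ellincorrect$ with the (random) wrong label that $\ell_a$ actually emits, whereas you condition one level deeper on $\{\ell_a=\ellincorrect\}$ and compare termwise. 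The concern you flag is therefore exactly on point: the paper's intended reading of~\eqref{eq:assumption_low} is at the level of the specific wrong value of $\ell_a$ (equivalently, allowing $\ellincorrect=\ell_a$), under which both your argument and the paper's one-line step go through; under a purely ``aggregate'' reading with fixed $\ellincorrect$ and no further conditioning, neither would close without the extra $N_c-1$ factor you identified.
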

\begin{proof}
\begin{align}
    \mathbb{P}(\ell_a = \ell_b) = &\mathbb{P}(\ell_b = \ell_a|\ell_a \neq \elloracle) \mathbb{P}(\ell_a \neq \elloracle) + \nonumber\\
    &\mathbb{P}(\ell_b = \ell_a|\ell_a = \elloracle) \mathbb{P}(\ell_a = \elloracle) \nonumber\\
\leq &\mathbb{P}(\ell_b = \elloracle|\ell_a \neq \elloracle) \mathbb{P}(\ell_a \neq \elloracle) + \nonumber\\
    &\mathbb{P}(\ell_b = \ell_a|\ell_a = \elloracle) \mathbb{P}(\ell_a = \elloracle) \nonumber\\
= &\mathbb{P}(\ell_b = \elloracle|\ell_a \neq \elloracle) \mathbb{P}(\ell_a \neq \elloracle) + \nonumber\\
    &\mathbb{P}(\ell_b = \elloracle|\ell_a = \elloracle) \mathbb{P}(\ell_a = \elloracle) \nonumber\\
    =&\mathbb{P}(\ell_b = \elloracle).
\end{align}
\end{proof}
 
In practice, a more accurate $\ell_a$ gives a tighter lower bound for $\ell_b$, and so we employ the aggregated human annotations for the former (letting $\ell_a=\ellaggregate$) to calculate the lower bound of the machine learning model (letting $\ell_b=\ellmodel$), as demonstrated in Section~\ref{subsec:result_discussion}.


\paragraph{Connection to common practice.} Generally, the ground truth of a benchmark corpus is constructed by aggregating multiple human annotations \cite{wang2018glue,wang2019superglue}. For example, the averaged sentiment score is used in SST \cite{socher2013recursive} and majority of votes in SNLI \cite{bowman2015large}. Then, the aggregated annotations are treated as ground truth to calculate accuracy. Under this setting, the accuracy on the (aggregated) human ground truth may be viewed as a special case of our lower bound.


\subsection{Finite Sample Analysis}

The results above assume that the agreement probabilities are known; we now connect with the finite sample case where those probabilities are estimated empirically. We begin with a standard concentration inequality (see \textit{e.g.} \cite[\S~2.6]{boucheron2013concentration}),
\begin{theorem}[Hoeffding's Inequality]
\label{thm:hoeffding}
    Let $X_1, \dots, X_N$ be independent random variables with finite variance such that $\mathbb{P}(X_n\in [\alpha, \beta])=1$, for all $1 \leq n \leq N$. Let
    $$\overline{X} \triangleq \frac{1}{N} \sum_{n=1}^N X_n,$$ 
    then, for any $t > 0$,
    \begin{align}
        \label{eqn:hoeffding} \nonumber
        \mathbb P(\overline{X} - \mathbb{E}[\overline{X}] \geq +t) \leq \exp\left( -\frac{2 N t^2}{(\alpha - \beta)^2} \right),\\
        \mathbb P(\overline{X} - \mathbb{E}[\overline{X}] \leq -t) \leq \exp\left( -\frac{2 N t^2}{(\alpha - \beta)^2} \right).
    \end{align}
\end{theorem}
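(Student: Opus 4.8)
The plan is to prove this by the classical Cram\'er--Chernoff (exponential moment) method. It suffices to establish the upper-tail bound $\mathbb{P}(\overline{X} - \mathbb{E}[\overline{X}] \geq t) \leq \exp(-2Nt^2/(\beta-\alpha)^2)$: the lower-tail bound then follows by applying this to the variables $-X_1,\dots,-X_N$, which are again independent with finite variance and satisfy $-X_n \in [-\beta,-\alpha]$, an interval of the same width $\beta-\alpha$. I would also assume $\mathbb{E}[X_n]=0$ for every $n$ (otherwise replace $X_n$ by $X_n - \mathbb{E}[X_n]$, which leaves $\beta-\alpha$ unchanged and shifts both sides of the inequality identically), so that $\mathbb{E}[\overline{X}]=0$.

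First, for any $\lambda>0$, I would apply Markov's inequality to the nonnegative variable $e^{\lambda N\overline{X}}$ and use independence of the $X_n$ to obtain
\begin{equation*}
\mathbb{P}(\overline{X}\geq t) = \mathbb{P}\!\left(e^{\lambda N\overline{X}}\geq e^{\lambda N t}\right) \leq e^{-\lambda N t}\,\mathbb{E}\!\left[e^{\lambda\sum_{n=1}^{N}X_n}\right] = e^{-\lambda N t}\prod_{n=1}^{N}\mathbb{E}\!\left[e^{\lambda X_n}\right].
\end{equation*}
The core step would be to bound each factor $\mathbb{E}[e^{\lambda X_n}]$ via Hoeffding's lemma, which is where essentially all the work lies. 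For a mean-zero $X$ with $X\in[\alpha,\beta]$, convexity of $x\mapsto e^{\lambda x}$ gives the pointwise bound $e^{\lambda x}\leq \frac{\beta-x}{\beta-\alpha}e^{\lambda\alpha}+\frac{x-\alpha}{\beta-\alpha}e^{\lambda\beta}$ on $[\alpha,\beta]$; taking expectations and using $\mathbb{E}[X]=0$ leaves $\mathbb{E}[e^{\lambda X}] \leq \frac{\beta}{\beta-\alpha}e^{\lambda\alpha}-\frac{\alpha}{\beta-\alpha}e^{\lambda\beta}$. Writing the right-hand side as $e^{\phi(\lambda)}$, one verifies $\phi(0)=\phi'(0)=0$ and $\phi''(\lambda)=p(1-p)(\beta-\alpha)^2$ for some $p=p(\lambda)\in[0,1]$, hence $\phi''(\lambda)\leq(\beta-\alpha)^2/4$; a Taylor expansion with Lagrange remainder then yields $\phi(\lambda)\leq \lambda^2(\beta-\alpha)^2/8$, i.e.\ $\mathbb{E}[e^{\lambda X_n}]\leq \exp(\lambda^2(\beta-\alpha)^2/8)$.

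Substituting this back would give, for every $\lambda>0$,
\begin{equation*}
\mathbb{P}(\overline{X}\geq t) \leq \exp\!\left(-\lambda N t + \tfrac{1}{8}\lambda^2 N(\beta-\alpha)^2\right).
\end{equation*}
The exponent is a convex quadratic in $\lambda$, minimized at $\lambda^\star = 4t/(\beta-\alpha)^2>0$; plugging in $\lambda^\star$ yields $\mathbb{P}(\overline{X}-\mathbb{E}[\overline{X}]\geq t)\leq \exp(-2Nt^2/(\beta-\alpha)^2)$, and the symmetry argument noted above gives the matching lower-tail bound. The only nontrivial ingredient is Hoeffding's lemma, and the main obstacle there is the convexity-plus-Taylor estimate showing $\phi''\leq(\beta-\alpha)^2/4$; everything else is Markov's inequality, the independence factorization, and a one-dimensional optimization.
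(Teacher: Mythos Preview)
Your argument is correct and is the classical Cram\'er--Chernoff derivation of Hoeffding's inequality: exponential Markov, independence factorization, Hoeffding's lemma for each factor via convexity plus the second-derivative bound $\phi''\leq(\beta-\alpha)^2/4$, and optimization in $\lambda$. One small remark: when you center, different $X_n$ may have different means, so the centered variables live in intervals $[\alpha-\mathbb{E}[X_n],\beta-\mathbb{E}[X_n]]$ that need not coincide across $n$; but since Hoeffding's lemma depends only on the interval width $\beta-\alpha$, which is common to all of them, your per-factor bound is unaffected and the argument goes through exactly as written.

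The paper, however, does not prove this statement at all. It invokes Hoeffding's inequality as a standard concentration result and simply cites \cite{boucheron2013concentration} (\S2.6) for it, then uses it as a black box to derive the finite-sample bounds in Theorems~\ref{thm:finitebasicupperbound} and~\ref{thm:finitebasiclowerbound}. So there is no ``paper's own proof'' to compare against; what you have supplied is precisely the textbook argument one would find in the cited reference.
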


\noindent Combining this with Thereom~\ref{thm:basicupperbound} we obtain the following.
\begin{theorem}[Sample Average Performance Upper Bound]
\label{thm:finitebasicupperbound}
Take the assumptions of Theorem~\ref{thm:basicupperbound}, and let 
\begin{align}
    \label{eqn:pn}
    \mathbb{P}^{(N)}(\ell_i = \ell_j)=\frac 1 N \sum_{n=1}^N\left[\ell^{(n)}_i=\ell^{(n)}_j\right]
\end{align}
be the empirical agreement ratio\footnote{Here $[\cdot]$ is the Iverson bracket.}.
Define
\begin{align}
\label{eqn:upperdelta}
\delta_u=\exp(-2 N t_u^2).
\end{align}
With probability at least $1-\delta_u$, for any $t_u>0$,
\begin{align}
\label{eqn:finitebasicupperbound}
     \mathbb{P}(\ellaverage = \elloracle)\leq\sqrt{t_u+\frac{1}{K^2} \sum_{i=1}^{K}\sum_{j=1}^{K}\mathbb{P}^{(N)}(\ell_i = \ell_j)}.
\end{align}
\end{theorem}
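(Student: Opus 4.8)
The plan is to recognize the double sum of empirical agreement ratios as the empirical mean of a collection of independent, bounded random variables, apply Hoeffding's inequality (Theorem~\ref{thm:hoeffding}) to control its downward deviation from $\mathcal{U}^2$, and then take square roots and chain the result with the deterministic bound of Theorem~\ref{thm:basicupperbound}.

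Concretely, first I would define, for each data point $n$, the per-sample averaged agreement
\begin{align*}
X_n \triangleq \frac{1}{K^2}\sum_{i=1}^{K}\sum_{j=1}^{K}\left[\ell_i^{(n)}=\ell_j^{(n)}\right],
\end{align*}
so that, by \eqref{eqn:pn}, $\overline{X}=\frac1N\sum_{n=1}^N X_n=\frac{1}{K^2}\sum_{i,j}\mathbb{P}^{(N)}(\ell_i=\ell_j)$, while by the definition of $\mathcal{U}$ in \eqref{eqn:basicupperbound} we have $\mathbb{E}[\overline{X}]=\frac{1}{K^2}\sum_{i,j}\mathbb{P}(\ell_i=\ell_j)=\mathcal{U}^2$. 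Since each $X_n$ is an average of Iverson brackets, $\mathbb{P}(X_n\in[0,1])=1$, which in particular makes its variance finite; and since distinct data points are annotated independently, $X_1,\dots,X_N$ are independent. These are exactly the hypotheses of Theorem~\ref{thm:hoeffding} with $[\alpha,\beta]=[0,1]$.

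Next I would invoke the lower-tail half of \eqref{eqn:hoeffding}: for any $t_u>0$, $\mathbb{P}\bigl(\overline{X}-\mathbb{E}[\overline{X}]\leq -t_u\bigr)\leq \exp(-2Nt_u^2)=\delta_u$. Hence, on the complementary event, which has probability at least $1-\delta_u$, we have $\mathcal{U}^2=\mathbb{E}[\overline{X}]\leq \overline{X}+t_u=t_u+\frac{1}{K^2}\sum_{i,j}\mathbb{P}^{(N)}(\ell_i=\ell_j)$. Applying monotonicity of the square root on this event gives $\mathcal{U}\leq\sqrt{t_u+\frac{1}{K^2}\sum_{i,j}\mathbb{P}^{(N)}(\ell_i=\ell_j)}$, and combining with $\mathbb{P}(\ellaverage=\elloracle)\leq\mathcal{U}$ from Theorem~\ref{thm:basicupperbound} yields the claimed inequality \eqref{eqn:finitebasicupperbound}.

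The one point I would treat carefully — and the main obstacle to a naive argument — is the dependence structure. The $K^2$ indicators $\left[\ell_i^{(n)}=\ell_j^{(n)}\right]$ sharing a fixed index $n$ are strongly dependent (for instance, they are not even distinct when transposing $i$ and $j$, and they interact through shared annotators), so Hoeffding cannot be applied to them individually. Bundling them into the single scalar $X_n$ per data point resolves this: it converts the harmless within-sample dependence into the clean across-sample independence that the inequality needs, at the mild cost of using the range $[0,1]$ rather than any finer structure of the summands. I would also note explicitly that the statement fixes $t_u>0$ (equivalently fixes the confidence level through $\delta_u=\exp(-2Nt_u^2)$), so no union bound over $t_u$ is required.
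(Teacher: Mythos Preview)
Your proposal is correct and matches the paper's proof essentially step for step: the paper also defines $X_n=\frac{1}{K^2}\sum_{i,j}[\ell_i^{(n)}=\ell_j^{(n)}]$, applies Hoeffding with $[\alpha,\beta]=[0,1]$ to obtain $\mathbb P(\sqrt{t_u+\mathcal U_N^2}\le\mathcal U)\le\delta_u$, and then chains this with Theorem~\ref{thm:basicupperbound}. Your additional remarks on within-sample dependence and the absence of a union bound over $t_u$ are helpful clarifications but do not depart from the paper's argument.
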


\begin{proof}
We apply Theorem~\ref{thm:hoeffding} with
\begin{align}
    \label{eqn:xnsampleupper}
    X_n = \frac{1}{K^2} \sum_{i=1}^{K}\sum_{j=1}^{K} \left[\ell^{(n)}_i=\ell^{(n)}_j\right],
\end{align}
obtaining $X_n \in [0,1]$, \textit{i.e.} $\alpha=0$, and $\beta=1$. Let 
\begin{align}
    \label{eqn:un}
    \mathcal{U}_{N} \triangleq \sqrt{\frac{1}{K^2} \sum_{i=1}^{K}\sum_{j=1}^{K}\mathbb{P}^{(N)}(\ell_i = \ell_j)}.
\end{align}
Our choice \eqref{eqn:xnsampleupper} of $X_n$ implies $\mathcal{U}_{N}^2=\overline{X}$ and $\mathcal{U}^2=\mathbb{E}[\overline{X}]$,
and so by \eqref{eqn:hoeffding},
    \begin{align}
    \label{eqn:middleinequalitya}
        \mathbb P\left(\sqrt{t_u+\mathcal{U}_{N}^2} \leq \mathcal{U}\right) \leq \delta_u.
    \end{align}
Rewrite \eqref{eqn:basicupperbound} as
    \begin{align}
    \label{eqn:basicupperboundnew}
            \mathbb{P}(\ellaverage = \elloracle) \leq \mathcal{U},
    \end{align}
which implies
    \begin{align}
    \label{eqn:middleinequalityb}
                \mathbb P\left(\sqrt{t_u+\mathcal{U}_{N}^2} \leq \mathbb{P}(\ellaverage = \elloracle)\right)
                \leq
                \mathbb P\left(\sqrt{t_u+\mathcal{U}_{N}^2} \leq \mathcal{U}\right).
    \end{align}
Combining \eqref{eqn:middleinequalitya} with \eqref{eqn:middleinequalityb} gives the result.
\end{proof}
\noindent Analagously for Theorem~\ref{thm:basiclowerbound}, we have
\begin{theorem}[Sample Performance Lower Bound]
\label{thm:finitebasiclowerbound}
Take the assumptions of Theorem~\ref{thm:basiclowerbound} along with \eqref{eqn:pn}.
Define
\begin{align}
\label{eqn:lowerdelta}
\delta_l=\exp(-2 N t_l^2).
\end{align}
With probability at least $1-\delta_l$, for any $t_l>0$,
\begin{align}
\label{eqn:finitebasiclowerbound}
\mathbb P^{(N)}(\ell_a=\ell_b)
\leq
t_l + \mathbb P(\ell_b=\elloracle).
\end{align}
\end{theorem}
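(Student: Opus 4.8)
The plan is to mirror the argument used for Theorem~\ref{thm:finitebasicupperbound}, replacing the averaged pairwise-agreement statistic with the single pairwise-agreement indicator. First I would apply Hoeffding's Inequality (Theorem~\ref{thm:hoeffding}) to the random variables
\begin{align*}
X_n = \left[\ell_a^{(n)} = \ell_b^{(n)}\right],
\end{align*}
which take values in $[0,1]$, so that $\alpha = 0$ and $\beta = 1$. With this choice, $\overline{X} = \mathbb P^{(N)}(\ell_a = \ell_b)$ by \eqref{eqn:pn}, while $\mathbb E[\overline{X}] = \mathbb P(\ell_a = \ell_b)$ since each of the $N$ data points is drawn from the same distribution.

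Next, I would invoke the upper-tail bound in \eqref{eqn:hoeffding}: for any $t_l > 0$,
\begin{align*}
\mathbb P\left(\mathbb P^{(N)}(\ell_a = \ell_b) - \mathbb P(\ell_a = \ell_b) \geq t_l\right) \leq \exp(-2 N t_l^2) = \delta_l,
\end{align*}
so that, with probability at least $1 - \delta_l$,
\begin{align*}
\mathbb P^{(N)}(\ell_a = \ell_b) \leq t_l + \mathbb P(\ell_a = \ell_b).
\end{align*}
Finally I would chain this with the deterministic bound \eqref{eq:lower_bound} from Theorem~\ref{thm:basiclowerbound}, namely $\mathbb P(\ell_a = \ell_b) \leq \mathbb P(\ell_b = \elloracle)$, valid under the stated assumption \eqref{eq:assumption_low}. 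Substituting gives $\mathbb P^{(N)}(\ell_a = \ell_b) \leq t_l + \mathbb P(\ell_b = \elloracle)$ on the same high-probability event, which is the claim.

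The argument is essentially routine; the only points requiring care are (i) selecting the correct tail of Hoeffding's inequality --- we need the empirical agreement to \emph{exceed} its mean, i.e. the $+t$ direction, so that after weakening $\mathbb P(\ell_a = \ell_b)$ upward to $\mathbb P(\ell_b = \elloracle)$ the composite inequality still points in the asserted direction; and (ii) making explicit the independence (i.i.d.\ sampling of the $N$ data points) needed to apply Theorem~\ref{thm:hoeffding} to the $X_n$, along with the identification of the population agreement $\mathbb P(\ell_a = \ell_b)$ with $\mathbb E[\overline{X}]$. No quantitative estimates beyond this direct substitution into the Hoeffding bound are needed, so I do not anticipate a genuine obstacle here.
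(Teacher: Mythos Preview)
Your proposal is correct and essentially identical to the paper's own proof: both apply Hoeffding's inequality to $X_n=[\ell_a^{(n)}=\ell_b^{(n)}]$ with $\alpha=0,\ \beta=1$, identify $\overline{X}=\mathbb P^{(N)}(\ell_a=\ell_b)$ and $\mathbb E[\overline{X}]=\mathbb P(\ell_a=\ell_b)$, take the $+t_l$ tail, and then chain with the deterministic bound \eqref{eq:lower_bound}. The only cosmetic difference is that the paper phrases the chaining as a probability monotonicity step, $\mathbb P(\mathcal L_N-t_l\geq \mathbb P(\ell_b=\elloracle))\leq \mathbb P(\mathcal L_N-t_l\geq \mathcal L)$, whereas you substitute on the high-probability event directly.
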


\begin{proof}
We apply Theorem~\ref{thm:hoeffding} with
\begin{align}
    \label{eqn:xnsamplelower}
    X_n = \left[\ell^{(n)}_a=\ell^{(n)}_b\right],
\end{align}
obtaining $X_n \in [0,1]$, \textit{i.e.} $\alpha=0$, and $\beta=1$. Let
    \begin{align}
    \label{eqn:ln}
        \mathcal{L}_N \triangleq \mathbb P^{(N)}(\ell_a=\ell_b).
    \end{align}
Now \eqref{eqn:xnsamplelower} implies $\mathcal{L}_{N}=\overline{X}$ and $\mathcal{L}=\mathbb P(\ell_a=\ell_b)= \mathbb{E}[\overline{X}]$,
    \begin{align}
    \label{eqn:lowermiddleinequalitya}
        \mathbb P\left(\mathcal{L}_N-t_l\geq \mathcal{L} \right) \leq \delta_l.
    \end{align}
Recall \eqref{eq:lower_bound}, $\mathbb{P}(\ell_a = \ell_b) \leq \mathbb{P}(\ell_b = \elloracle)$, which implies
\begin{align}
    \label{eqn:lowermiddleinequalityb}
    \mathbb P\left(\mathcal{L}_N-t_l\geq \mathbb P(\ell_b=\elloracle)\right) \leq
    \mathbb P\left(\mathcal{L}_N-t_l\geq \mathcal{L}\right).
\end{align}
Combining \eqref{eqn:lowermiddleinequalitya} with \eqref{eqn:lowermiddleinequalityb} gives the result.
\end{proof}


\subsection{Detecting and Certifying Superhuman Models} 
We propose a procedure to discover potentially superhuman models based on our theorems.
\begin{itemize}
    \item Calculate the upper bound of the average oracle accuracy of human annotators, $\mathcal{U}_N$, with $N$ samples;
    \item Calculate the lower bound of the model oracle accuracy $\mathcal{L}_N$ using aggregated human annotations as the reference\footnote{We demonstrate that aggregating the predictions by voting and weighted averaging are effective in improving our bounds. We emphasize however that the aggregated predictions need not be perfect, as we do not assume that this aggregation yields an oracle.}, with $N$ samples;
    \item Check whether the finite sample margin $\mathcal{L}_N-\mathcal{U}_N$ is larger than zero;
    \item Give proper estimation of $t_u$ and $t_l$ and calculate a confidence score of $\mathbb P(\mathcal{L}-\mathcal{U} \geq 0)$.
\end{itemize}

Generally, larger margin indicates higher confidence of the out-performance. To formally check confidence for the aforementioned margin we provide
\begin{theorem}[Confidence of Out-Performance]
    \label{thm:marginoutperformancetwodeltas} 
    Assume an annotator pool with agreement statistic $\mathcal U_N$ of \eqref{eqn:un}, and an agreement statistic between model and aggregated annotations $\mathcal L_N$ of \eqref{eqn:ln}.
    If $\mathcal L_N > \mathcal U_N$ then for all $\tau \geq 0$, $t_u \geq 0$ and $t_l \geq 0$ that satisfy
    \begin{equation}
        \label{eqn:margin_assumption}
        \mathcal L_N - t_l - \sqrt{t_u + \mathcal U_N^2} = \tau,
    \end{equation}
    with probability at least $1-\delta_u-\delta_l$, the model oracle accuracy exceeds that of the average annotator by $\tau$, \textit{i.e.}
    \begin{align}
        \label{eqn:marginslack}
        \mathbb{P}(\ellmodel = \elloracle) - \mathbb P(\ellaverage=\elloracle) \geq \tau,
    \end{align}
    where
    \begin{align}
        \label{eqn:delta_u}
        \delta_u & = \exp\left(-2 N t_u^2\right),
        \\
        \label{eqn:delta_l}
        \delta_l & = \exp\left(-2 N t_l^2\right).
    \end{align}
\end{theorem}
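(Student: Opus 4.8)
The plan is to derive the statement by a one-line union bound on top of the two finite-sample bounds already in hand, Theorem~\ref{thm:finitebasicupperbound} and Theorem~\ref{thm:finitebasiclowerbound}; no new concentration argument is required.

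First I would instantiate Theorem~\ref{thm:finitebasicupperbound} with the given slack $t_u$ (its hypothesis, the positive-correlation assumption of Theorem~\ref{thm:basicupperbound}, being carried over implicitly here). This yields the event
\begin{align}
\label{eqn:planA}
A:\quad \mathbb{P}(\ellaverage = \elloracle)\leq \sqrt{t_u+\mathcal{U}_N^2},
\end{align}
which holds with probability at least $1-\delta_u$, where $\delta_u=\exp(-2Nt_u^2)$. Next I would instantiate Theorem~\ref{thm:finitebasiclowerbound} with $\ell_a=\ellaggregate$ and $\ell_b=\ellmodel$ and the given slack $t_l$ (its hypothesis \eqref{eq:assumption_low} for that pair again carried over). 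Since $\mathcal{L}_N=\mathbb{P}^{(N)}(\ellaggregate=\ellmodel)$, this yields the event
\begin{align}
\label{eqn:planB}
B:\quad \mathcal{L}_N-t_l\leq \mathbb{P}(\ellmodel = \elloracle),
\end{align}
which holds with probability at least $1-\delta_l$, where $\delta_l=\exp(-2Nt_l^2)$.

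Then I would apply the union bound, so that both events $A$ and $B$ hold simultaneously with probability at least $1-\delta_u-\delta_l$. On that joint event, subtracting \eqref{eqn:planA} from \eqref{eqn:planB} and invoking the defining constraint \eqref{eqn:margin_assumption} gives
\begin{align}
\label{eqn:planC}
\mathbb{P}(\ellmodel = \elloracle)-\mathbb{P}(\ellaverage = \elloracle)\geq \mathcal{L}_N-t_l-\sqrt{t_u+\mathcal{U}_N^2}=\tau,
\end{align}
which is exactly \eqref{eqn:marginslack}. I would close by remarking that the hypothesis $\mathcal{L}_N>\mathcal{U}_N$ makes the admissible set of triples $(\tau,t_u,t_l)$ nonempty --- e.g.\ $t_u=t_l=0$ forces $\tau=\mathcal{L}_N-\mathcal{U}_N>0$ --- so the conclusion is not vacuous, while larger choices of $t_u,t_l$ trade a smaller certified margin $\tau$ for a higher confidence $1-\delta_u-\delta_l$.

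The only delicate point, and the closest thing to an obstacle, is bookkeeping rather than mathematics: the events $A$ and $B$ are governed by different empirical averages --- $A$ by the annotator--annotator agreement counts $\tfrac{1}{K^2}\sum_{i,j}[\ell_i^{(n)}=\ell_j^{(n)}]$ and $B$ by the model--aggregate indicators $[\ellaggregate^{(n)}=\ellmodel^{(n)}]$ --- and these are in general statistically dependent. The union bound is insensitive to that dependence, so the argument goes through verbatim; one must simply be explicit that the structural assumptions of Theorems~\ref{thm:basicupperbound} and~\ref{thm:basiclowerbound} are in force for the relevant label pairs, and that $\sqrt{t_u+\mathcal{U}_N^2}$ is well defined since both summands are nonnegative.
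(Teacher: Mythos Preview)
Your proposal is correct and follows essentially the same route as the paper: instantiate Theorems~\ref{thm:finitebasicupperbound} and~\ref{thm:finitebasiclowerbound} with the respective slacks $t_u$ and $t_l$, then combine the two high-probability events via a union bound and subtract to recover \eqref{eqn:marginslack} using the constraint \eqref{eqn:margin_assumption}. The paper's proof merely spells out the union-bound step in explicit probabilistic notation, whereas you state it in words; your added remarks on the non-vacuousness of the constraint set and the irrelevance of dependence between the two empirical statistics are correct and helpful extras not present in the original.
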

\begin{proof}
   Recall Theorem~\ref{thm:finitebasicupperbound} and Theorem~\ref{thm:finitebasiclowerbound},
   \begin{align}
   \nonumber
   \mathbb P\left(\sqrt{t_u+\mathcal{U}_{N}^2} \leq \mathbb{P}(\ellaverage = \elloracle)\right)
   &\leq \delta_u
   \\
   \nonumber
   \mathbb P\left(\mathcal{L}_N-t_l\geq \mathbb P(\ellmodel=\elloracle)\right)
   &\leq \delta_l.
   \end{align}
   Then, we have
   \begin{align}
   & \mathbb P\left(\mathbb P(\ellmodel=\elloracle) - \mathbb{P}(\ellaverage = \elloracle) \geq \tau\right) \nonumber\\
   = & \nonumber\mathbb P\left(\mathbb P(\ellmodel=\elloracle) - \mathbb{P}(\ellaverage = \elloracle) \geq \mathcal L_N - t_l - \sqrt{t_u + \mathcal U_N^2}\right)  \\
   \geq & \nonumber\mathbb P\left(\mathbb P(\ellmodel=\elloracle) \geq \mathcal L_N - t_l \cap  \mathbb{P}(\ellaverage = \elloracle) \leq  \sqrt{t_u + \mathcal U_N^2}\right)  \\
   & =  \nonumber 1-\mathbb P\left(\mathbb P(\ellmodel=\elloracle) \leq \mathcal L_N - t_l \cup  \mathbb{P}(\ellaverage = \elloracle) \geq  \sqrt{t_u + \mathcal U_N^2}\right) \\
   \geq & \nonumber 1-\mathbb P\left(\mathbb P(\ellmodel=\elloracle) \leq \mathcal L_N - t_l) \right) - \mathbb P\left(  \mathbb{P}(\ellaverage = \elloracle) \geq  \sqrt{t_u + \mathcal U_N^2}\right) \\
   \geq & 1-\delta_l-\delta_u.
   \end{align}
\end{proof}

\paragraph{Confidence Score Estimation.}
The above theorem suggests the confidence score
\begin{align}
    S=1-\delta_l-\delta_u,
\end{align}
and we need now only choose the free constants $t_l, t_u$ and $\tau$, which it depends on.
Recall \eqref{eqn:margin_assumption},
    \begin{equation}
     \tau = \left(\mathcal L_N-t_l\right)-\sqrt { t_u + \mathcal U_N^2},
     \end{equation}
and remove one degree of freedom parameterise in $t_u$ as
    \begin{equation}
     t_l(t_u,\tau) = \mathcal L_N-\tau-\sqrt { t_u + \mathcal U_N^2}.
    \end{equation}
We are interested in $\mathbb{P}(\mathcal L - \mathcal{U} \geq 0)$ and so we may set $\tau=0$. We offer two alternatives for selecting $t_u$ and $t_l$.

\noindent\textbf{Algorithm 1 (Heuristic Margin Separation, HMS).}
We assign half of the margin to $t_u$,
\begin{equation}
    t_u = \frac{\mathcal L_N-\mathcal U_N}{2}. 
\end{equation}
Then, with $\tau=0$ we calculate the corresponding
\begin{equation}
    t_l = \mathcal L_N-\sqrt { \frac{\mathcal L_N-\mathcal U_N}{2} + \mathcal U_N^2},
\end{equation}
and compute the heuristic confidence score $S$. 

\noindent\textbf{Algorithm 2 (Optimal Margin Separation, OMS).}\ \\
For an locally (in $t_u$) optimal confidence score, we perform gradient ascent~\cite{lemarechal2012cauchy} on $S(t_u)$, where
\begin{align}
     &S(t_u) = 1-\delta(t_u)-\delta(t_l(t_u,0)), 
\end{align} 
with $t_u$ is initialized as $(\mathcal L_N-\mathcal U_N)/2$ before optimization\footnote{For all OMS experiments, we set learning rate 1e-4, and iterate 100 times. We will publish our code upon acceptance.}.



\section{Experiments and Discussion}
Previously, we introduced a new theory for analyzing the oracle accuracy of set of classifiers using observed agreements between them. In this section, we demonstrate our theory on several classification tasks, to demonstrate the utility of the theory and reliability of the associated assumptions.

\subsection{Experimental Setup}
We first consider two classification tasks with oracle labels generated by rules. Given the oracle predictions, we are able to empirically validate the assumptions for our theorems and observe the convergence of the bounds. Then, we apply our theory on two real-world classification tasks and demonstrate that some existing state-of-the-art models have potentially achieved better performance than the (averaged) performance of the human annotators.

\paragraph{Classification tasks with oracle rules.}
To validate the correctness of our theory, we collect datasets with observable oracle labels. We construct two visual cognitive tasks, \taskcolor and \taskshape, with explicit unambiguous rules to acquire oracle labels, as follows:
\begin{itemize}
    \item \taskcolor: the oracle selects the most frequently occuring color of the objects in a given image.
    \item \taskshape: the oracle selects the most frequent occurring shape of the objects in a given image.
\end{itemize}
For both tasks, the size of the objects is ignored. As illustrated in Figure~\ref{fig:task_examples}, we vary three colors (\textit{Red}, \textit{Blue} and \textit{Yellow}) and five shapes (\textit{Triangle}, \textit{Square}, \textit{Pentagon}, \textit{Hexagon} and \textit{Circle}) for the two tasks, respectively.

\begin{figure}[h]
\hfill
\begin{subfigure}[b]{0.48\linewidth}
	\includegraphics[width=1.0\linewidth]{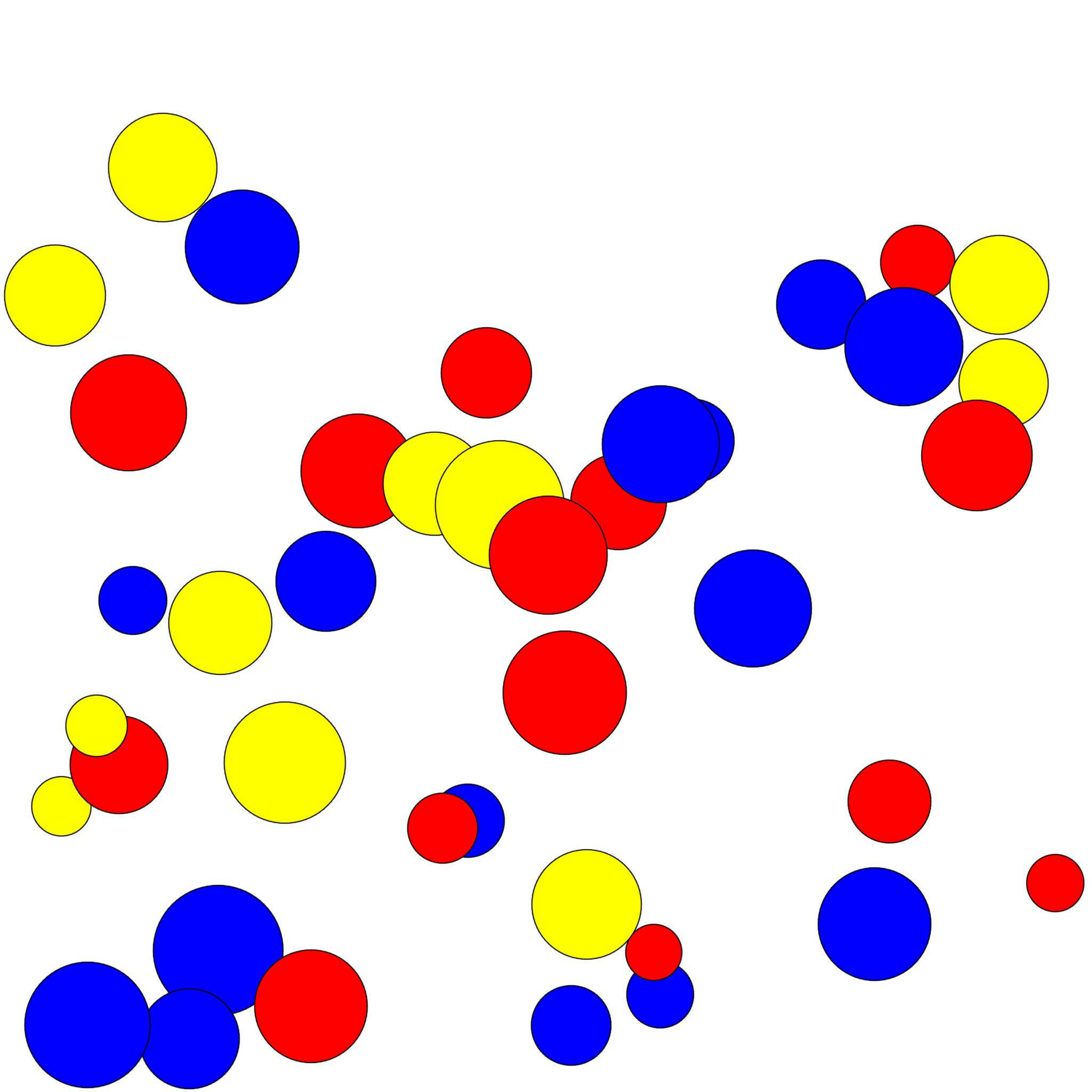}
	\caption{Color}
	\label{fig:task_example_color}
\end{subfigure}
 \hfill
\begin{subfigure}[b]{0.48\linewidth}
	\includegraphics[width=1.0\linewidth]{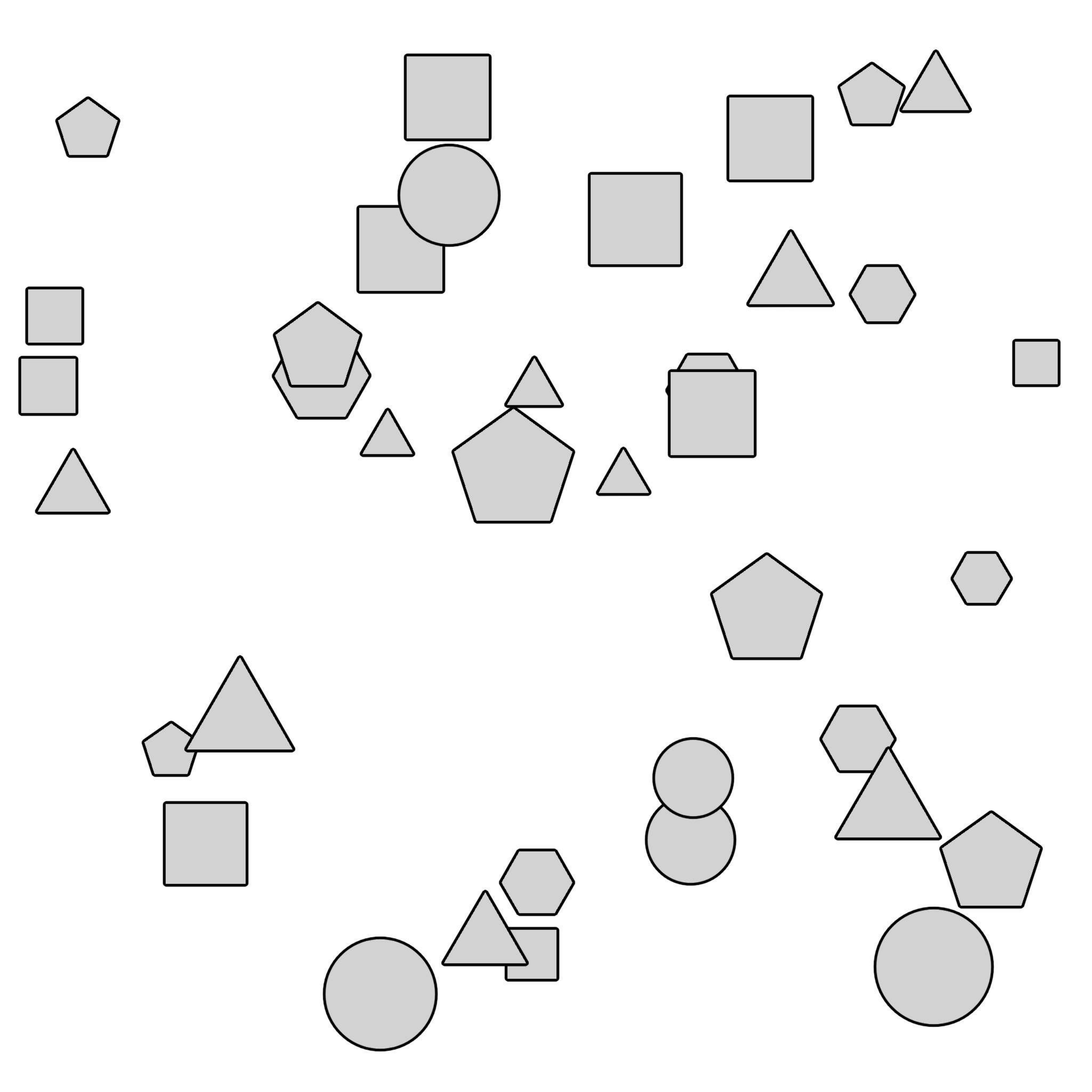}
	\caption{Shape}
	\label{fig:task_example_shape}
\end{subfigure}
\hfill
	\caption{Examples of \textit{a)} \taskcolor and \textit{b)} \taskshape. Example \textit{a)} includes 40 objects of three different colors, \textit{Red} (14), \textit{Blue} (15) and \textit{Yellow} (11), with \textit{Blue} as the most frequent color and therefore the oracle label. Example \textit{b)} includes 37 objects of five different shapes, \textit{Triangle} (9), \textit{Square} (10), \textit{Pentagon} (7), \textit{Hexagon} (6) and \textit{Circle} (5), with \textit{Square} the dominant shape and oracle label.}
	\label{fig:task_examples}
\end{figure}

For each task, we generated 100 images and recruited 10 annotators from the \textit{Amazon Mechanical Turk}\footnote{\url{https://www.mturk.com}} to label them. 
Each randomly generated example includes 20 to 40 objects.
We enforce that no objects overlap more than 70\% with all others, and that there is only one class with the highest count, to ensure uniqueness of the oracle label.
The oracle number of the colors and shapes are recorded to generate oracle labels of the examples.
More details about annotation interfaces and guidelines are provided in Appendix~\ref{appendix:AMT}.

\paragraph{Real-World Classification Tasks.}
We analyze the performance of human annotators and machine learning models on two real-world NLP tasks, namely sentiment classification and natural language inference (NLI). We use the Stanford Sentiment Treebank (\textbf{SST})~\cite{socher2013recursive} for sentiment classification. The sentiment labels are mapped into two classes (SST-2)\footnote{Samples with overall neutral scores are excluded as in \cite{tai2015improved}.} or five classes (SST-5), \textit{very negative}($[0, 0.2]$), \textit{negative} ($(0.2, 0.4]$), \textit{neutral}($(0.4, 0.6]$), \textit{positive} ($(0.6, 0.8]$), and \textit{very negative} ($(0.8, 1.0]$).
We use the Stanford Natural Language Inference (\textbf{SNLI}) corpus~\cite{bowman2015large} for NLI. All samples are classified by five annotators into three categories, \textit{i.e.} \textit{Contradiction}~(C), \textit{Entailment}~(E), and \textit{Neutral}~(N).
More details of the datasets are reported in Table~\ref{tab:nlp_datasets}.
In the latter part of this section, we only report the estimated upper bounds on test sets, as we intend to compare them with the performance of machine learning models generally evaluated on test sets.
\begin{table}[h]
    \centering
\resizebox{\linewidth}{!}{
    \begin{tabular}{lccc}
      \toprule
    
Dataset &  \#Test & \#Class & \#Annot.\\
\midrule
SST-2 \cite{socher2013recursive} & 1,821 & 2 & 3\\
SST-5 \cite{socher2013recursive} & 2,210 & 5 & 3\\
SNLI \cite{bowman2015large} & 10,000 & 3 & 5\\
    \bottomrule
 \end{tabular}
 }
    \caption{Statistics of SST and SNLI: the number of test samples, number of classes, and the number of annotators.}
    \label{tab:nlp_datasets}
\end{table}



\begin{figure*}[t]
  \hfill
  \begin{minipage}{.43\linewidth}
    \centering
    \subcaptionbox{Color Classification}
      {\includegraphics[trim=10mm 0mm 10mm 10mm, width=\linewidth]{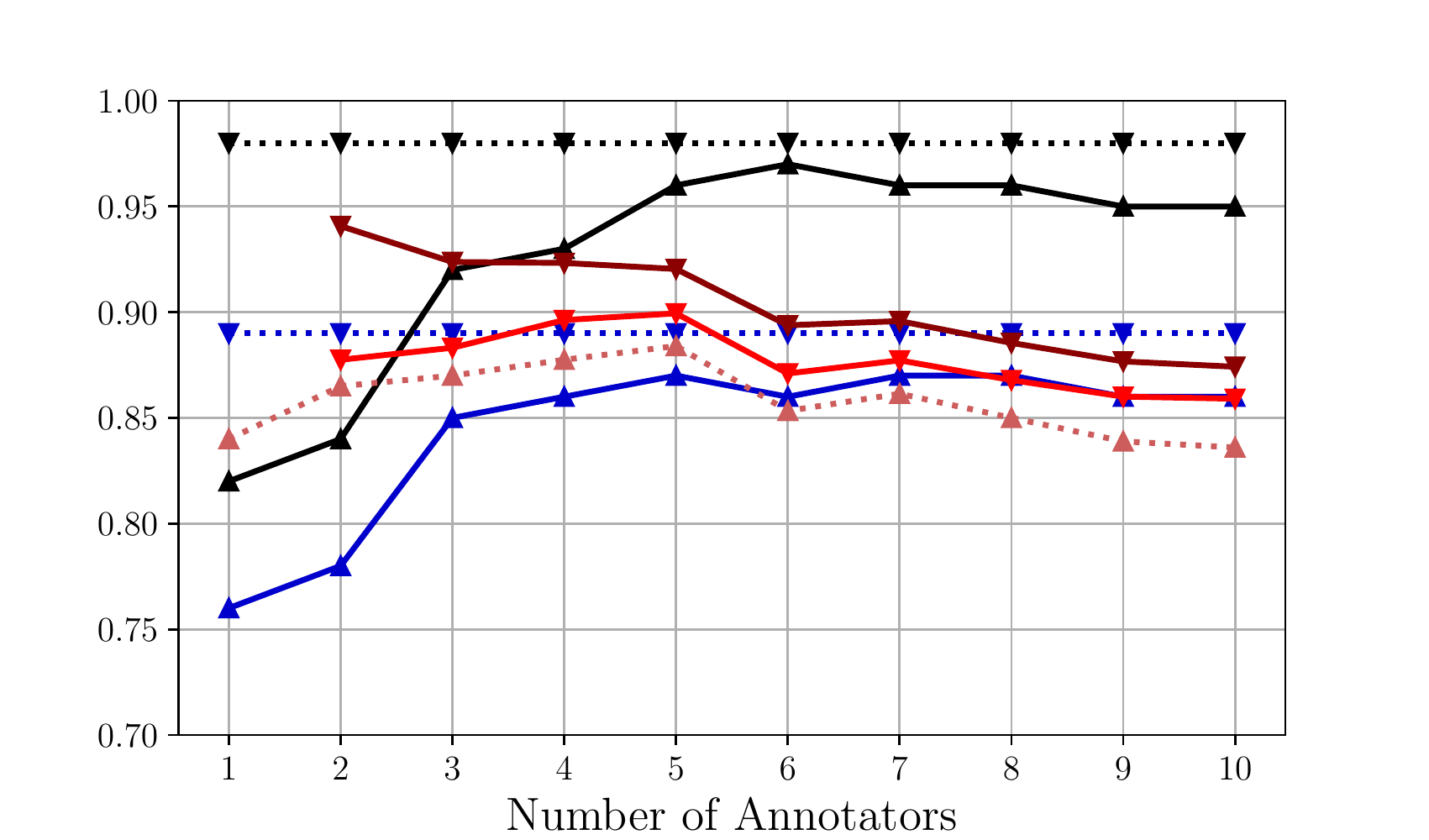}}
  \end{minipage}
  \hfill
  \begin{minipage}{.43\linewidth}
    \centering
    \subcaptionbox{Shape Classification}
      {\includegraphics[trim=10mm 0mm 10mm 10mm, width=\linewidth]{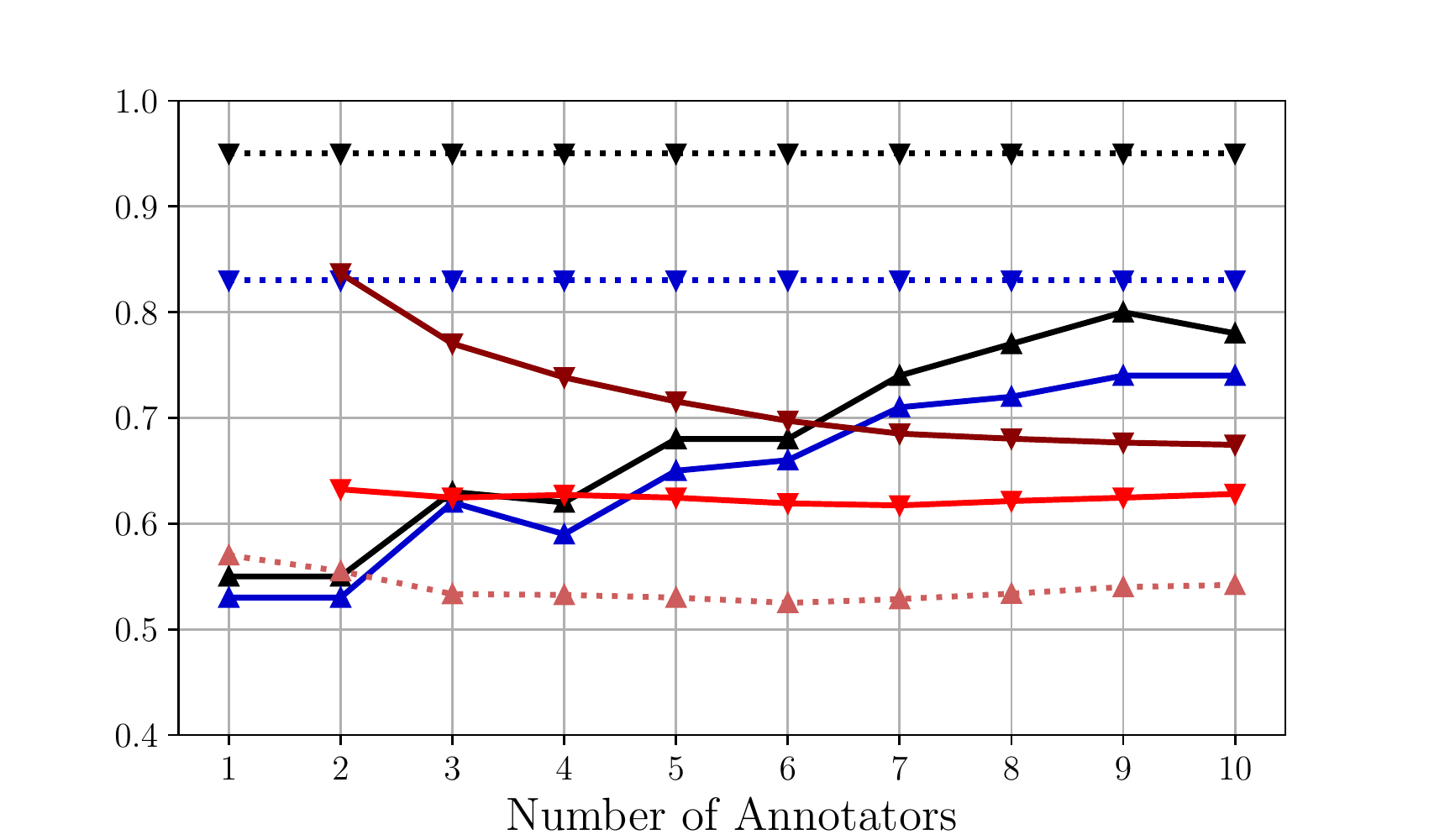}}
 \end{minipage}
 \hfill
 \begin{minipage}[t]{.12\linewidth}
    \centering
    \includegraphics[trim=0cm 22mm 0cm 0mm, width=0.95\linewidth]{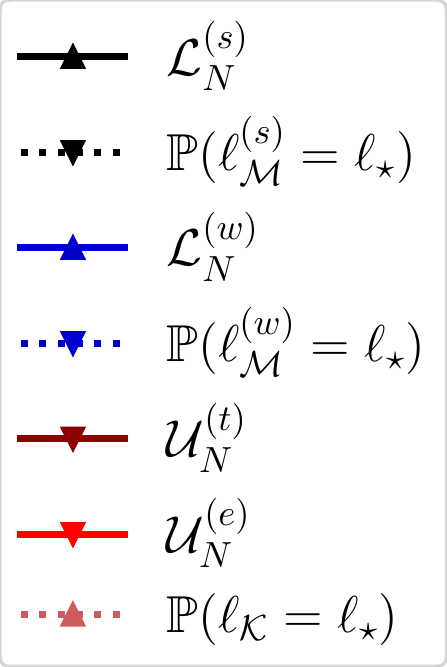}
  \end{minipage}
  \hfill
  \caption{Comparison of sample lower bound $\mathcal L_N$ for model oracle accuracy $\mathbb{P}(\ellmodel = \elloracle)$. Relatively strong and weak models are indicated by $\mathcal M^{(s)}$ and $\mathcal M^{(w)}$. Another comparison of sample theoretical upper bound $\mathcal{U}_N^{(t)}$ and sample empirical upper bound $\mathcal{U}_N^{(e)}$ of average oracle accuracy of annotators $\mathbb{P}(\ellaverage = \elloracle)$. }
   \label{fig:bounds}
\end{figure*}

\paragraph{Machine Learning Models.}
For both of the classification tasks with known oracles, we treat them as detection tasks and train YOLOv3 models \cite{redmon2018yolov3} for them. The input image resolution is 608 $\times$ 608 and we use the proposed Darknet-53 as the backbone feature extractor. For comparison, we train two models, a strong model and a weak model, on 512 and 128 randomly generated examples, respectively. All models are trained for a maximum of 200 epochs until convergence. During inference, the model detects the objects and we count each type of object to obtain the prediction.

We compare several representative models and their variants for real-world classification tasks, such as Recurrent Neural Networks~\cite{chen2018enhancing,zhou2015c}, Tree-based Neural Networks~\cite{mou2016natural,tai2015improved}, and Pre-trained Transformers~\cite{devlin2019bert,radfordimproving,wang2020structbert,sun2020self}.

\subsection{Results and Discussion}
\label{subsec:result_discussion}

We now conduct several experiments to validate the convergence of the bounds and the validity of the assumptions. We then demonstrate the utility of our theory by detecting superhuman models. We organize the discussion into several research questions (RQ).

\paragraph{RQ1: Will the bounds converge given more annotators?} 
We first analyze the lower bounds. We demonstrate lower bounds for strong models (s) and weak models (w) in Figure~\ref{fig:bounds} in black and blue lines respectively. Generally, \textit{i)} the lower bounds $\mathcal L_N$ are always under the oracle accuracy of corresponding models; \textit{ii)} the lower bounds grow and tend to get closer to the bounded scores given more aggregated annotators.
Then, we analyze the upper bounds. We illustrate theoretical upper bound $\mathcal{U}_N^{(t)}$ and empirically approximated upper bound $\mathcal{U}_N^{(e)}$, in comparison with average oracle accuracy of annotators $\mathbb P(\ellaverage=\elloracle)$, in Figure~\ref{fig:bounds}.
We observe that \textit{i)} both upper bounds give higher estimation than the average oracle accuracy of annotators; \textit{ii)} the margin between $\mathcal{U}_N^{(t)}$ and $\mathcal{U}_N^{(e)}$ reduce, given more annotators incorporated; \textit{iii)} $\mathcal{U}_N^{(e)}$ generally provides a tighter bound than $\mathcal{U}_N^{(t)}$, and we will use $\mathcal{U}_N^{(e)}$ as $\mathcal{U}_N$ to calculate confidence score in later discussion.



\paragraph{RQ2: Are the assumptions of our theorems valid?} We verify the key assumptions for the upper bound of Theorem~\ref{thm:basicupperbound} and the lower bound of Theorem~\ref{thm:basiclowerbound} by computing the relevant quantities in Table~\ref{tab:test_assumption}. The two assumptions hold in our experiments, although we can only perform this analysis on the tasks with known oracle labels. For the assumptions required for our lower bound, our experiment is more conservative than the assumption, as we sum over all incorrect labels (see column 2 of Table~\ref{tab:test_assumption}.b). Despite the stricter setting, our assumption still holds on both experiments.

\begin{table}[h]
    \centering
\resizebox{\columnwidth}{!}{
    \begin{tabular}{lcc}
      \toprule
Task &  $\mathbb{P}(\ell_i = \elloracle|\ell_j = \elloracle)$ & $\mathbb{P}(\ell_i = \elloracle)$\\
\midrule
Color & \textbf{0.850} & 0.836\\
\midrule
Shape & \textbf{0.586} & 0.542\\
\bottomrule
\\[-0.5em]
\multicolumn{3}{c}{(a) Theorem~\ref{thm:basicupperbound} assumes $\mathbb{P}(\ell_i = \elloracle|\ell_j = \elloracle) \geq \mathbb{P}(\ell_i = \elloracle)$, $i \neq j$}\\ \\
\toprule
Task $b$ &  $\mathbb{P}(\ell_b = \elloracle|\ell_a \neq \elloracle)$ & $\sum_{\ellincorrect \neq \elloracle}\mathbb{P}(\ell_b = \ellincorrect|\ell_a \neq \elloracle)$\\
\midrule
Color $\mathcal M^{(w)}$ & \textbf{1.000} & 0.000\\
Color $\mathcal M^{(s)}$ & \textbf{1.000} & 0.000\\
\midrule
Shape $\mathcal M^{(w)}$ & \textbf{0.579} & 0.421\\
Shape $\mathcal M^{(s)}$ & \textbf{0.895} & 0.105\\
\bottomrule
\\[-0.5em]
\multicolumn{3}{c}{(b) Theorem~\ref{thm:basiclowerbound} assumes $\mathbb{P}(\ell_b = \elloracle|\ell_a \neq \elloracle) \geq \mathbb{P}(\ell_b = \ellincorrect|\ell_a \neq \elloracle)$}\\
 \end{tabular}
 }
    \caption{Validating our assumptions for both upper bound Theorem~\ref{thm:basicupperbound} and lower bound Theorem~\ref{thm:basiclowerbound} on Color and Shape.}
    \label{tab:test_assumption}
\end{table}

\noindent\textit{Disclaimer:} 
while the assumptions appear reasonable, we recommend where possible to obtain a small set of oracle labels to validate the assumptions in future research.

\newcommand{\cspace}{~~~~~~~~~~~~~~~}
\begin{table*}[t]
\resizebox{\textwidth}{!}{
\centering
\begin{tabular}{ l cl cl cl } 
 \toprule
  & \multicolumn{2}{c}{\cspace \textsc{SST 5-Class}} & \multicolumn{2}{c}{\cspace \textsc{SST 2-Class}} & \multicolumn{2}{c}{\cspace \textsc{SNLI 3-Class}} \\
 
 B & Classifier & Score & Classifier & Score & Classifier & Score\\
 \midrule \midrule
 $\mathcal{U}_N^{(t)}$ & Avg. Human & 0.790 $^\ddagger$ & Avg. Human & 0.960 $^\ddagger$ & Avg. Human & 0.904 $^\ddagger$\\
 $\mathcal{U}_N^{(e)}$ & Avg. Human & 0.660 $^\dagger$ & Avg. Human & 0.939 $^\dagger$ & Avg. Human & 0.879 $^\dagger$\\
 \midrule \midrule
                & \shortstack{CNN-LSTM \\ \cite{zhou2015c}} & 0.492 & \shortstack{CNN-LSTM \\ \cite{zhou2015c}} & 0.878 & \shortstack{BiLSTM \\ \cite{chen2018enhancing}} & 0.855\\
 \cmidrule(lr){2-7}
   $\mathcal{L}_N$ & \shortstack{Constituency Tree-LSTM \\
                \cite{tai2015improved}} & 0.510 & \shortstack{Constituency Tree-LSTM \\ \cite{tai2015improved}}& 0.880 &  \shortstack{Tree-CNN \\ \cite{mou2016natural}} & 0.821\\
 \cmidrule(lr){2-7} 
                  & \shortstack{BERT-large \\ \cite{devlin2019bert}} & 0.555 & \shortstack{BERT-large \\ \cite{devlin2019bert}} & 0.949$^\dagger$ &  \shortstack{LM-Pretrained Transformer\\ \cite{radfordimproving}} & 0.899$^\dagger$\\
 \cmidrule(lr){2-7} 
                & \shortstack{RoBERTa+Self-Explaining \\ \cite{sun2020self}} & 0.591 & \shortstack{StructBERT \\ \cite{wang2020structbert}} & 0.971$^\ddagger$ & \shortstack{SemBERT \\ \cite{zhang2020semantics}} & 0.919$^\ddagger$\\
 \bottomrule
\end{tabular}
}
\caption{The sample theoretical upper bounds and sample empirically approximated upper bounds, $\mathcal{U}_N^{(t)}$ and $\mathcal{U}_N^{(e)}$, of the average oracle accuracy of the human annotators, and the sample lower bounds $\mathcal{L}_N$ of some representative models on the SST and SNLI tasks. Those models with $\mathcal{L}_N$ higher than $\mathcal{U}_N^{(e}$ or even $\mathcal{U}_N^{(t)}$ are highlighted with $\dagger$ or $\ddagger$.} 
\label{tab:nlp_tasks}

\end{table*}
\paragraph{RQ3: How to identify a `powerful', or even superhuman, classification model?}
We first compare the $\mathcal L_N$ with $\mathcal{U}_N$ in our toy experiments, in Figure~\ref{fig:bounds}. Overall, it is more likely to observe superhuman performance given more annotators. We observe that $\mathcal L_N^{(s)}$ outperforms both $\mathcal{U}_N^{(e)}$ and $\mathcal{U}_N^{(t)}$, given more than 4 and 6 annotators for color classification and shape classification, respectively. When the model is marginally outperforming human, see weak model for color classification, we may not observe a clear superhuman performance margin, $\mathcal L_N^{(s)}$ and $\mathcal{U}_N^{(e)}$ are very close given more than 7 annotators.

For real-world classification tasks, we \textit{i)} calculate the average annotator upper bounds given multiple annotators' labels and \textit{ii)} collect model lower bounds reported in previous literature. Some results on SST and SNLI are reported in Table~\ref{tab:nlp_tasks}. We observe that pre-trained language models provide significant performance improvement on those tasks. Our theory manages to identify some of these models that potentially exceed the average human annotator performance, by comparing $\mathcal{U}^{(e)}_N$ or the even more restrictive $\mathcal{U}^{(t)}_N$.

\paragraph{RQ4: How confident are the certifications?} We calculate our confidence score for the identified outperforming models via $\mathcal{U_N}$, $\mathcal{L_N}$, $N$, and using HMS and OMS, as reported in Table~\ref{tab:sota_confidence}.
Generally, the confidence scores for SNLI models are higher than those of SST-2 because the former has test set is more than five times larger, while more recent and advanced models achieve higher confidence scores as they have larger margin of $\mathcal{L}_N - \mathcal{U}_N$.

\begin{table}[h]
    \centering
    \begin{tabular}{lccc}
      \toprule
 Model &  Task & S(HMS) & S(OMS)\\
\midrule


\cite{devlin2019bert} & SST-2 & $<0$ & $<0$\\
\cite{wang2020structbert} & SST-2 & 0.4730 & \textbf{0.6208}\\
\midrule
\cite{radfordimproving}& SNLI & 0.8482 & \textbf{0.9267}\\
\cite{zhang2020semantics} & SNLI & 0.9997 & \textbf{0.9999}\\

\bottomrule
 \end{tabular}
    \caption{Confidence score $S$ for the certificated models that outperform human annotators in SST-2 and SNLI.
    \label{tab:sota_confidence}}
\end{table}


\section{Conclusions}
In this paper, we built a theory towards estimating the oracle accuracy of classifiers. Our theory covers \textit{i)} the upper bounds for the average performance of human annotators, \textit{ii)} lower bounds for machine learning models, and \textit{iii)} confidence scores which formally capture the degree of certainty to which we may assert that a model outperforms human annotators. Our theory provides formal guarantees even within the highly practically relevant realistic setting of a finite data sample and no access to an oracle to serve as the ground truth. Our experiments on synthetic classification tasks validate the plausibility of the assumptions on which our theorems are built. Finally, our meta analysis of existing progress succeeded in identifying some existing state-of-the-art models have already achieved superhuman performance compared to the average human annotator.


\section*{Broader Impact}
Our approach can identify classification models that outperform typical humans in terms of classification accuracy. Such conclusions influence the understanding of the current stage of research on classification, and therefore potentially impact the strategies and policies of human-computer collaboration and interaction. The questions we may help to answer include the following: \textit{When should we prefer a model's diagnosis over that of a medical professional? In courts of law, should we leave sentencing to an algorithm rather than a Judge?} These questions and many more like them are too important to ignore. Given recent progress in machine learning we believe the work is overdue.



Yet we caution that estimating a model's oracle accuracy in this way is not \textit{free}. Our approach requires the results from multiple annotators and preferably also the number of annotators should be higher than the number of possible classes in the target classification task. Another potential challenge in applying our analysis is that some of our assumptions may not hold under some specific tasks or settings. We recommend those who apply our theory where possible to collect a small amount of `oracle' annotations, to validate the assumptions in this paper.

\bibliography{aaai22}


\newpage
\clearpage
\onecolumn
\appendix
\section{Proof Details}%
\label{appendix:proof}%
\paragraph{Proof of Lemma~\ref{lem:ue}}
\begin{align}
    \frac{\mathcal{U}}{\mathcal{U}^{(e)}} =& \sqrt{\frac{K-1}{K} \frac{\sum_{i=1}^{K}\sum_{j=1}^{K}\mathbb{P}(\ell_i = \ell_j)}{\sum_{i=1}^{K}\sum_{\substack{j=1\\i\neq j}}^{K}\mathbb{P}(\ell_i = \ell_j)}} \nonumber \\
    =& \sqrt{\frac{K-1}{K}} \sqrt{1+ \frac{\sum_{i=1}^{K}\mathbb{P}(\ell_i = \ell_i)}{\sum_{i=1}^{K}\sum_{\substack{j=1\\i\neq j}}^{K}\mathbb{P}(\ell_i = \ell_j)}} \nonumber \\
     =& \sqrt{\frac{K-1}{K}} \sqrt{1+ \frac{K}{\sum_{i=1}^{K}\sum_{\substack{j=1\\i\neq j}}^{K}\mathbb{P}(\ell_i = \ell_j)}} \label{eq:empirical_approx_2_terms}
\end{align}
For the first factor in \eqref{eq:empirical_approx_2_terms},
\begin{equation}
    \lim_{K\rightarrow +\infty} \sqrt{\frac{K-1}{K}} = 1. \label{eq:converge_term1}
\end{equation}
For the second factor in \eqref{eq:empirical_approx_2_terms}, as both annotators address the same task, the annotator agreement should be better than guessing uniformly at random, \textit{i.e.} $\mathbb{P}(\ell_i = \ell_j) \geq 1/N_c$, where $N_c$ is the number of classes. Then, we have
\begin{equation*}
    0 \leq \frac{K}{\sum_{i=1}^{K}\sum_{\substack{j=1\\i\neq j}}^{K}\mathbb{P}(\ell_i = \ell_j)} \leq \frac{N_c}{K-1}.
\end{equation*}
As $ \lim_{K\rightarrow +\infty} \frac{N_c}{K-1} = 0$,
\begin{equation}
    \lim_{K\rightarrow +\infty} \sqrt{1+ \frac{K}{\sum_{i=1}^{K}\sum_{\substack{j=1\\i\neq j}}^{K}\mathbb{P}(\ell_i = \ell_j)}}=1. \label{eq:converge_term2}
\end{equation}
Combining \eqref{eq:converge_term1} and \eqref{eq:converge_term2}, we have 
\begin{equation}
\lim_{K\rightarrow +\infty} \frac{\mathcal{U}}{\mathcal{U}^{(e)}}=1.
\end{equation}



\section{Details for Annotation}
\label{appendix:AMT}
We crowd source the annotations via the \textit{Amazon Mechanical Turk}. The annotation interfaces with instructions for color classification and shape classification are illustrated in Figure~\ref{fig:task_interface}.  Each example is annotated by $K=10$ different annotators. For quality control, we \textit{i)} offer our tasks only to experienced annotators with 100 or more approved HITs; \textit{ii)} automatically reject answers from annotators who have selected `None of the above'.

\begin{figure}[t]
\hfill
\begin{subfigure}[b]{1.\linewidth}
	\includegraphics[width=1.0\linewidth]{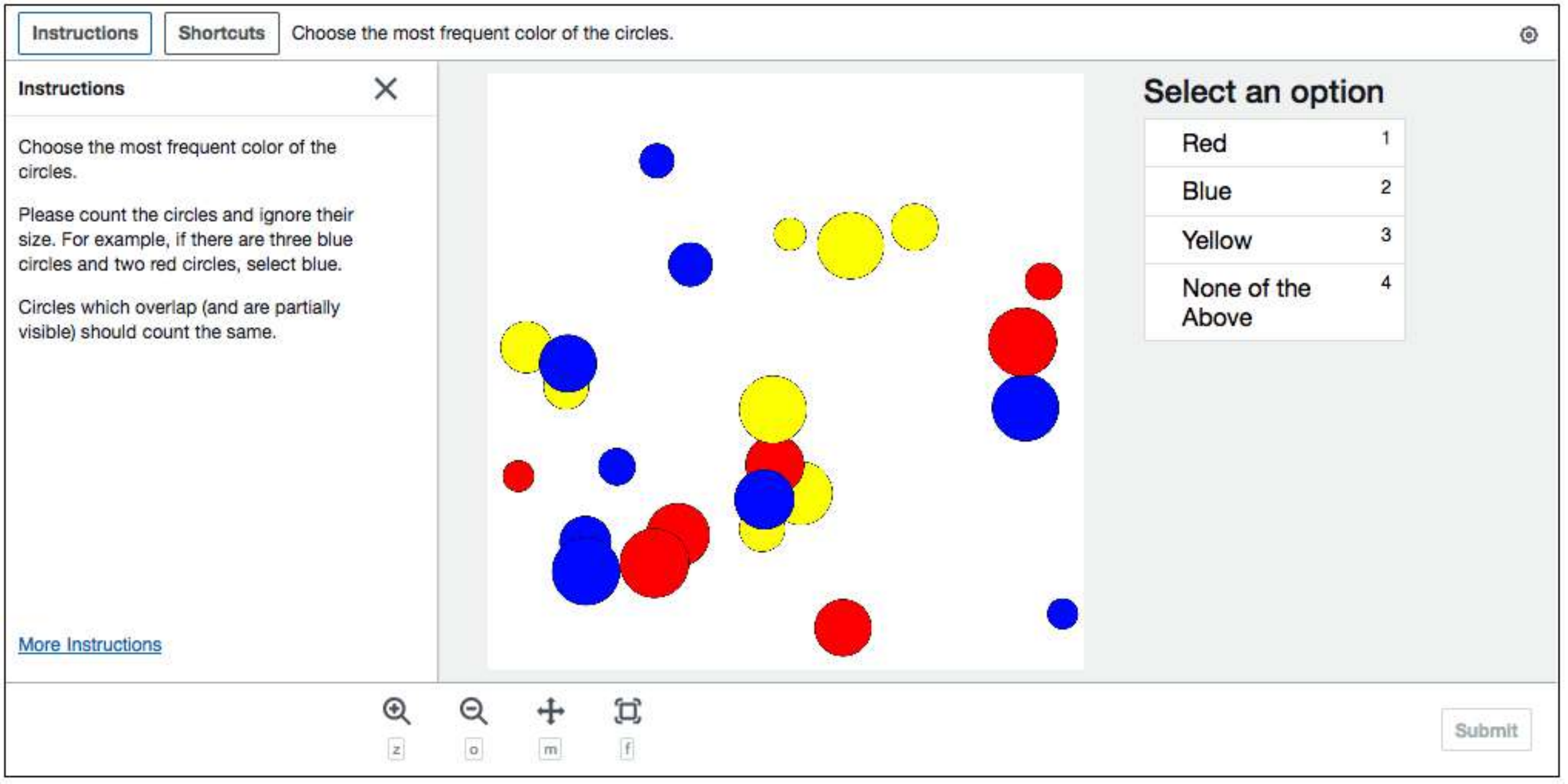}
	\caption{Color Classification}
	\label{fig:task_example_color}
\end{subfigure}
 \hfill
\begin{subfigure}[b]{1.\linewidth}
	\includegraphics[width=1.0\linewidth]{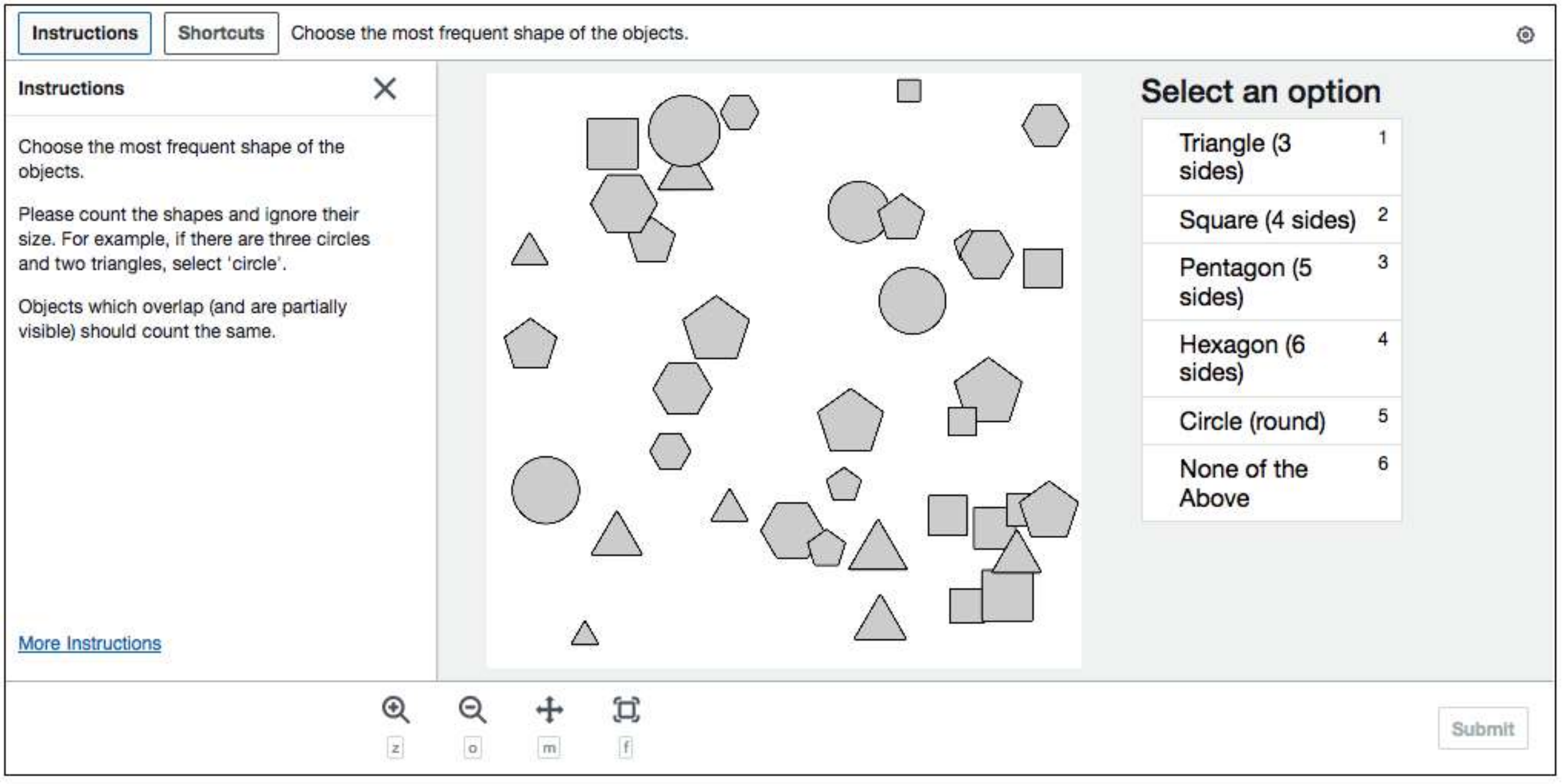}
	\caption{Shape Classification}
	\label{fig:task_example_shape}
\end{subfigure}
\hfill
	\caption{Human annotator interface for the Color Classification and Shape Classification tasks.}
	\label{fig:task_interface}
\end{figure}

\end{document}